\documentclass[journal]{IEEEtran} 
\usepackage{multicol}
\usepackage{color}
\usepackage{epstopdf}
\usepackage{amssymb,amsmath,amsfonts}
\usepackage{amsthm}
\usepackage{algorithm}
\usepackage{subfigure}
\usepackage{graphicx}
\usepackage{lipsum}
\usepackage[noend]{algpseudocode}
\usepackage{color}
\usepackage{todonotes}
\usepackage{url}
\usepackage{chngcntr}
\usepackage{thmtools}
\usepackage{cite}
\newcommand{\revone}[1]{{\color{black}{#1}}}
\usepackage{mathtools}
\newcommand{\ie}[1]{\textit{i.e.,}}
\usepackage{cancel}

\newcommand{\DC}{\textsc{DiskCover}}
\newcommand{\DCT}{\textsc{DiskCoverTour}}
\newcommand{\kDCT}{\textsc{\textit{k}-DiskCoverTour}}

\newtheorem{theorem}{\textbf{Theorem}}
\newtheorem{problem}{\textbf{Problem}}

\newtheorem{lemma}{\textbf{Lemma}}

\makeatletter
\def\BState{\State\hskip-\ALG@thistlm}
\makeatother

\pdfminorversion=5 \pdfcompresslevel=9 \pdfobjcompresslevel=3
\ifCLASSINFOpdf

\else

\fi

\begin{document}

\title{Learning a Spatial Field in Minimum Time with a Team of Robots}

\author{Varun~Suryan,~\IEEEmembership{Student Member,~IEEE,}
        and~Pratap~Tokekar,~\IEEEmembership{Member,~IEEE}
\thanks{Varun Suryan (suryan@umd.edu) and Pratap Tokekar (tokekar@umd.edu) are with the Department
of Computer Science, University of Maryland, College Park,
MD, 20740 USA. This work was done when the authors were with the Bradley Department of Electrical and Computer Engineering, Virginia Tech, Blacksburg, VA, 24060 USA.}
}
\maketitle


\begin{abstract}
We study an informative path-planning problem where the goal is to minimize the time required to learn a spatially varying entity.  \revone{We use Gaussian Process (GP) regression for learning the underlying field.} \revone{Our goal is to ensure that the GP posterior variance, which is also the mean square error between the learned and actual fields,} is below a predefined value. We study three versions of the problem. In the placement version, the objective is to minimize the number of measurement locations~\revone{while ensuring that the posterior variance is below a predefined threshold.} In the mobile robot version, we seek to minimize the total time required to visit and collect measurements from the measurement locations using a single robot. \revone{We also study a multi-robot version} where the objective is to minimize the time required by the last robot to return to a common starting location called depot. By exploiting the properties of GP regression, we present constant-factor approximation algorithms. In addition to the theoretical results, we also compare the empirical performance using a real-world dataset, with other baseline strategies.
\end{abstract}

\begin{IEEEkeywords}
Informative path planning, Gaussian Process regression.
\end{IEEEkeywords}
\IEEEpeerreviewmaketitle

\section{Introduction} \label{sec:introduction}
\IEEEPARstart{S}{ensing}, modeling, and tracking various spatially varying entities can improve our knowledge and understanding of them. This can have significant economic, environmental, and health implications. For example, knowing the content of various nutrients in the soil of a farm can help the farmers better understand soil chemistry. Understanding soil chemistry is helpful for the farmers to improve the yield and reduce the application of fertilizers~\cite{aktar2009impact}. An overload of certain chemicals inside a water body may have a significant impact on marine life. Knowing the spatial extent of the spill is necessary for effective control and mitigation strategies~\cite{adzigbli2018assessing}. Understanding the spatial variation of rock minerals can help in efficient mining strategies~\cite{Blachowski2014}.
In all such applications, a key first step is the collection of data using appropriate sensors which can then be used to build models of the underlying phenomenon. However, collecting data can be tedious and often requires careful human planning. Manual data collection can also be dangerous. For example, volcano monitoring data helps to see where previous lava flows have gone and previous ash fall has occurred. However, volcanic ash is usually pulverized rocks and glass particles and potentially catastrophic for the people engaged in monitoring~\cite{1607983}. One alternative which would alleviate the human risks of manual data collection is the use of robots equipped with appropriate sensors to collect data.

There are many factors to consider when deploying robots for data collection. Usually, a trade-off must be made between the quantity of sensing resources (e.g., number of deployed robots, energy consumption, mission time) and the quality of data collected. The robots can be deployed to act as stationary or mobile sensors depending on the application~(Figure~\ref{fig:kentland}). Deploying robots to function as mobile sensors is especially challenging because of the need for path planning. While deploying mobile robotic sensors, one needs to plan the most informative resource-constrained observation paths to minimize the uncertainty in modeling and tracking \revone{the spatial phenomena}.



Planning informative resource-constrained observation paths for robot sensors to estimate a spatially varying entity, often known as Informative Path Planning~\revone{(IPP),} has received recent attention in the robotics community~\cite{krause2008near,singh2009efficient,ouyang2014multi,hollinger2014sampling,ling2016gaussian,tokekar2016sensor}. IPP deals with the problem of deciding an autonomous robot path along which maximum possible information about a quantity of interest can be extracted while operating under a set of resource constraints. In this paper, our quantity of interest is a spatially varying phenomenon, often represented by a spatial field\footnote{In this paper, a spatial field is a function, $f(x), x\in U$, that is defined over a spatial domain, $U\subset \mathbb{R}^2.$}. Generally, the underlying spatial field is specified by a probabilistic model. One of the commonly used probabilistic models is Gaussian Process (GP)~\cite{rasmussen2006gaussian}. \revone{GPs provide an mathematically convenient way of performing non-parametric regression while making fewer assumptions on the underlying field. They allow for expressing domain knowledge through the choice of kernel functions. In particular, for spatially varying fields, numerous studies have shown the efficacy of modeling with GPs~\cite{schulz2018tutorial}. An alternative would be geometric models which make strong assumptions and cannot represent the stochastic noise in the measurements directly~\cite{krause2008near}. Thus, probabilistic models make a suitable candidate for such scenarios.}
\begin{figure}
  \centering
  \includegraphics[width=0.6\linewidth]{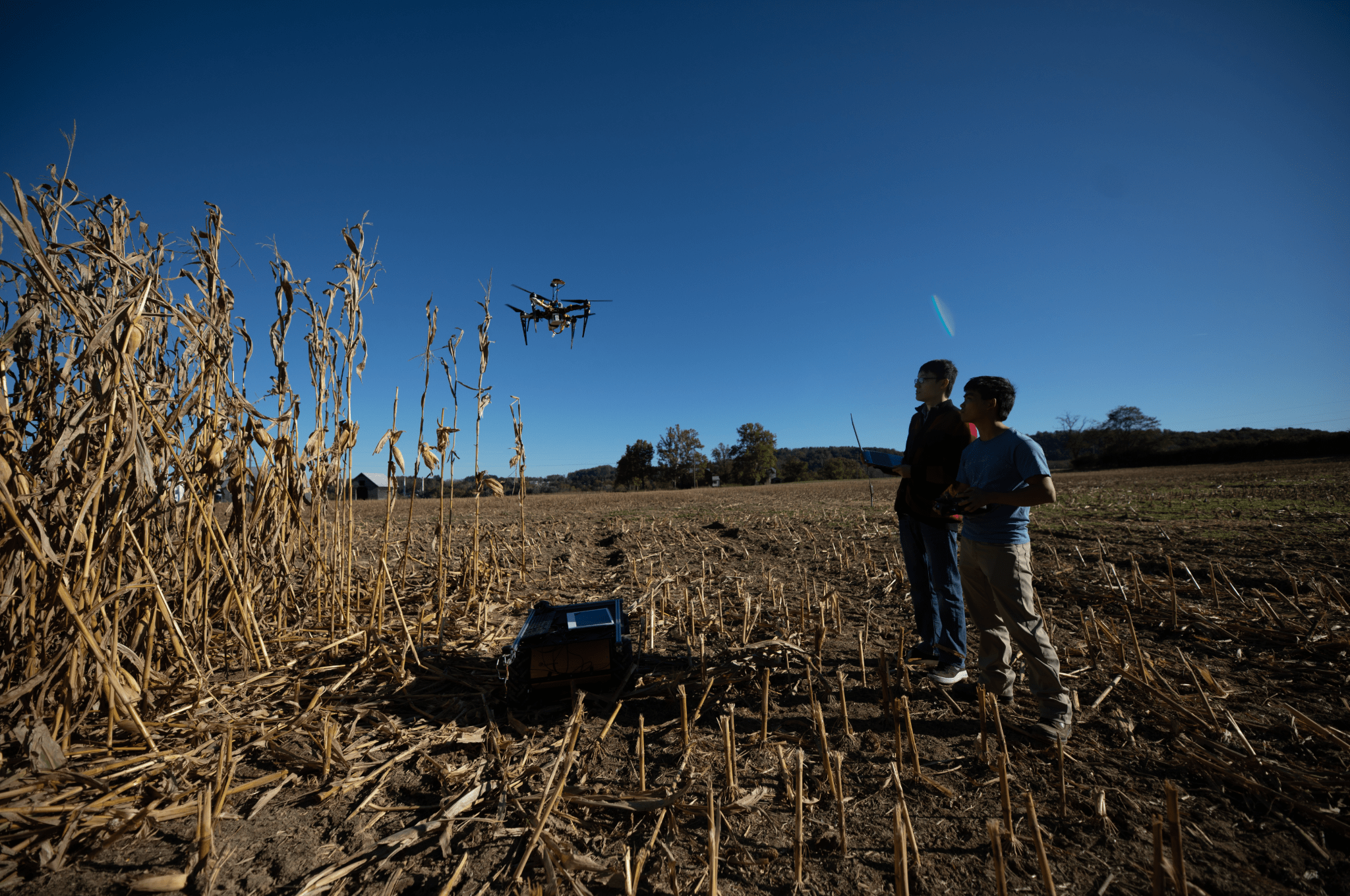}
  \caption{A single quadcopter can fly over a farm and measure the height of the crop using a LIDAR sensor.\label{fig:kentland}}
\end{figure}

\revone{Once the underlying spatial field is modeled, the next task is to plan a robot path based on that model.} The robot travels along a path \revone{planned} in this step. Several metrics can be used to perform \revone{the} planning step. An information-theoretic metric such as mutual information, entropy, or variance is typically used as a criterion to drive the robot to sampling locations~\cite{krause2008efficient}. Generally, the information-theoretic metrics are submodular and hence, an approximation guarantee can be given on the performance of the resulting algorithms~\cite{krause2008near}. Unfortunately, the information-theoretic metrics such as entropy, mutual information, etc. are indirect and do not consider the accuracy of the predictions. Unlike these works, we study how to ensure that the GP predicted mean\footnote{We use \emph{predicted} mean and \emph{estimated} mean interchangeably since the function is independent of time.} is accurate and present a constant-factor approximation algorithm if the hyperparameters of the GP kernel do not change.

We use \revone{variance of GP prediction} as the metric to perform the planning step. \revone{Predictive variance also turns out to be the Mean Square Error (MSE) in GP prediction if the hyperparameters are known~\cite{waagberg2016prediction}.} Our goal in this work is to plan the informative paths such that the predictive variance at all locations is below a predefined threshold, $\Delta$, after collecting measurements using mobile sensors. This leads to same guarantees on MSE as well.
We study three related problems, that of:
\revone{
\begin{enumerate}
    \item finding measurement locations to make measurements;
    \item planning a tour for a single robot to visit those measurement locations; and
    \item planning tours for multiple mobile robots
\end{enumerate}}
to ensure that the \revone{predictive variance} is below $\Delta$. The objective is to minimize the number of measurement locations in the first problem and the total tour time in the second problem. \revone{With} multiple robots, the objective is to minimize the maximum time taken among all the robots. The total tour time is given by the measurement time and the travel time between measurement locations. The measurement time depends on the number of measurements taken at each location as well as the time to take a single measurement. Depending on the sensor, the measurement time can be zero (e.g., cameras) or non-zero (e.g., soil probes measuring organic content). We show that a non-adaptive algorithm suffices to solve the problem and yields a polynomial-time constant-factor approximation to the optimal algorithm. While other algorithms have been proposed \revone{before} for estimating spatial fields, this is the first result that provides the theoretical guarantees on the total time for ensuring predictive accuracy at all points. Our main contributions include:
\begin{itemize}
    \item introducing stationary sensor placement and mobile sensor algorithms for ensuring that the \revone{predictive variance, and hence MSE,} at each location in the environment is below a predefined threshold,
    \item providing polynomial-time constant-factor approximation guarantees on their performance, and
    \item showing their performance on a real-world dataset comprising of organic matter concentrations at various locations within a farm.
\end{itemize}
Similar problems have been studied in the literature. \revone{For example, Yfantis et. al.~\cite{yfantis1987efficiency} studied a stationary sensor problem. Their approach considers and investigates only three types of pre-defined placement designs while for a general case none of them may be a good design. The algorithms presented in this work are not restricted to any pre-defined placement strategy.} Further, we are not aware of any existing theoretical guarantees on the mobile sensor problems studied in this paper.

The rest of the paper is organized as follows: In section~\ref{sec:lit_review}, we present a discussion on related works and background on the problems studied. In section~\ref{ch:algorithms}, we formally present the problems and their solutions. Simulation results are presented in the section~\ref{ch:results} followed by the conclusion and scope for the future work in section~\ref{ch:conclusion}.

A preliminary version of this work was presented at the $13^{th}$ International Workshop on the Algorithmic Foundations of Robotics (WAFR'18)~\cite{suryan2018sensor}. In the preliminary version, we provided guarantees for the chance constraints of incorrect predictions~\revone{using an aggregate measure of prediction error}. In current work, a more \revone{direct} performance criterion,~\revone{MSE at each location of the environment}, is considered which leads to stronger theoretical guarantees. Also, an extension of the algorithms for the multi-robot case is presented.

\section{Related Work and Background} \label{sec:lit_review}
We begin by reviewing the related work in sensor placement where the goal is to cover a given environment using sensors placed at fixed locations and mobile sensing where sensors can move and collect measurements from different locations.
\subsection{Stationary Sensor Placement}
When monitoring a spatial phenomenon, such as temperature or humidity in an environment, selection of a limited number of sensors and their locations is an important problem. The goal in this problem is to select the best $k$ out of $n$ possible sensor locations and use the measurements from these to make predictions about the spatial phenomenon. The typical formulation of a sensor selection problem makes it NP-hard~\cite{bian2006utility}. Previous work used global optimization techniques such as branch and bound to exactly solve this problem~\cite{welch1982branch, kirkpatrick1983optimization}. However, these exact approaches are often computationally intensive.

One can solve the task as an instance of the art-gallery problem~\cite{hochbaum1985approximation, Gonzalez-Banos:2001:RAA:378583.378674}: Find the minimum set of guards inside a polygonal workspace from which the entire workspace is visible. However, this version of the problem only covers vision-based sensors and does not consider noisy measurements~\cite{krause2008near}.

An alternative approach from spatial statistics is to learn a model of the phenomenon, typically as a GP~\cite{caselton1984optimal, cressie1992statistics}. The learned GP model can then be used to predict the effect of placing sensors at locations and thus optimize their placement. For a given GP model, many criteria including information-theoretic ones have been proposed to evaluate the quality of placement. Shewry and Wynn introduced the maximum entropy criterion~\cite{shewry1987maximum} where the sensors are placed sequentially at the locations of highest entropy. Ko et al.~\cite{10.2307/171694} proposed a greedy algorithm by formulating the entropy maximization as maximizing the determinant of the covariance matrix. However, the entropy criterion tends to place the sensors at the boundary of the environment thus wasting sensed information~\cite{ramakrishnan2005gaussian}. Mutual information (MI) can be used as well~\cite{caselton1984optimal, zimmerman2006optimal, Zhu2006}. Krause et al.~\cite{krause2008near} study the problem of maximizing MI for optimizing sensor placement problem. They present a polynomial-time approximation algorithm with constant factor guarantee by exploiting  submodularity~\cite{nemhauser1978analysis}. Eventually, they show that MI criterion leads to improved accuracy with a fewer number of sensors compared to other common design criteria such as entropy~\cite{shewry1987maximum}, A-optimal, D-optimal, and E-optimal design~\cite{atkinson2014optimal}.

The above-mentioned methods estimate the prediction error indirectly. Nguyen et al.~\cite{6466575} consider choosing a set of $n$ potential sensor measurements such that the root mean square prediction error is minimized. They present an annealing based algorithm for the sensor selection problem. Their algorithm starts by selecting a potential subset of cardinality $k$ from the entire population of sensor locations. After that, it iteratively attempts to substitute the members of the selected subset by its neighbors according to an optimization criterion.

None of the criteria discussed above cannot directly make any guarantees on the MSE in predictions at each point in the environment. Instead, we design a sensor placement algorithm which results in an accurate reconstruction of the spatial field using the collected sensor measurements. Most works in the past have focused on optimizing an objective function (entropy, MI, etc.) given the resource constraints (limited energy, number of sensors, and time, etc.). We optimize the resource requirement given the objective constraint (MSE below a predefined threshold $\Delta$), predictive accuracy more than a predefined threshold in our case.

\subsection{Mobile Sensing}
The goal in the mobile sensing problem, also known as Informative Path Planning (IPP), is to compute paths for robots acting as mobile sensors. Paths are being computed in order to accurately estimate some underlying phenomenon, typically a spatial field~\cite{jawaid2015informative, krause2007nonmyopic}. A central problem in IPP is to identify the hotspots in a large-scale spatial field. Hotspots are the regions in which the spatial field measurements exceed a predefined threshold. In many applications, it is necessary to assess the spatial extent and shape of the hotspot regions accurately. Low et al. presented a decentralized active robotic exploration strategy for probabilistic classification/labeling of hotspots in a GP-based spatial field~\cite{low2012decentralized}. The time needed by their strategy is independent of the map resolution and the number of robots, thus making it practical for \textit{in situ}, real-time active sampling. Another formulation in hotspot identification is that of level set identification~\cite{galland2004synthetic}. 

Previous works on level set boundary estimation and tracking~\cite{singh2006active, dantu2007detecting, srinivasan2008ace} have primarily focused on communication of the sensor nodes, without giving much attention to individual sampling locations. Bryan et al.~\cite{bryan2006active} proposed the straddle heuristic, which selects sampling locations by trading off uncertainty and proximity to the desired threshold level, both estimated using GPs. However, no theoretical justification had been given for its use and its extension to composite functions~\cite{bryan2008actively}. Gotovos et al. proposed a level set estimation algorithm, which utilizes GPs to model the target function and exploits its inferred confidence bounds to drive the selection process. They provided an information-theoretic bound on the number of measurements needed to achieve a certain accuracy, when the underlying function is sampled from a GP~\cite{gotovos2013active}.

In many mobile sensing problems, it is not enough to identify only a few specific regions but estimate the entire spatial field accurately. It can be formulated as a path planning problem to observe a spatial field at a set of sampling locations, and then making inference about the unobserved locations~\cite{861310}. Choosing and visiting the sample locations so that one can have an accurate prediction (point prediction and/or prediction interval) is of great importance in soil science, agriculture, and air pollution monitoring~\cite{Zhu2006}. The objective functions used are usually submodular and thus exhibit a diminishing returns property. Submodularity arises since \revone{nearby} measurement locations are correlated~\cite{krause2011submodularity}. Chekuri and Pal introduced a quasi-polynomial time algorithm~\cite{chekuri2005recursive} for maximizing a submodular objective along the path using a~\textit{recursive greedy strategy}. This algorithm was further extended by Binney et al.~\cite{binney2013optimizing} for spatiotemporal fields using average variance reduction~\cite{das2008algorithms} as the objective function.

Zhang and Sukhatme proposed an adaptive sampling algorithm consisting of a set of static nodes and a mobile robot tasked to reconstruct a scalar field~\cite{zhang2007adaptive}. They assume that the mobile robot can communicate with all the static nodes and acquire sensor readings from them. Based on this information, a path planner generates a path such that the resulting integrated mean square error is minimized subject to the constraint that the boat has a finite amount of energy.

An important issue in designing robot paths is deciding the next measurement location~\cite{leonard2007collective, popa2006ekf, sim2005global, singh2009efficient}, often referred to as the exploration strategy. Traditionally, conventional sampling methods~\cite{rahimi2004adaptive} such as raster scanning, simple random sampling, and stratified random sampling have been used for single-robot exploration. Low et al. presented an adaptive exploration strategy called adaptive cluster sampling. It was demonstrated to reduce mission time and yield more information about the environment~\cite{low2007adaptive}. Their strategy performs better than a baseline sampling scheme called systematic sampling~\cite{tuchscherer1993thompson} using root mean squared error as a metric. A different adaptive multi-robot exploration strategy called MASP  was presented in~\cite{low2008adaptive} which performs both wide-area coverage and hotspot sampling using non-myopic path planning. MASP allows for varying adaptivity and its performance is theoretically analyzed. Further, it was demonstrated to sample efficiently from a GP and \textit{log}GP. However, the time complexity of implementing it depends on the map resolution, which limits its large-scale use. To alleviate this computational difficulty, an information-theoretic approach was presented in~\cite{low2009information}. The time complexity of the new approach was independent of the map resolution and less sensitive to the increasing robot team size. Garnett et al.~\cite{garnett2012bayesian} considered the problem of active search, which is also about sequential sampling from a domain of two (or more) classes. Their goal was to sample as many points as possible from one of the classes.


Yilmaz et al.~\cite{yilmaz2008path} solved the adaptive sampling problem using mixed integer linear programming. Popa et al.~\cite{popa2006ekf} posed the adaptive sampling problem as a sensor fusion problem within the extended Kalman filter framework. Hollinger and Sukhatme proposed a sampling-based motion planning algorithm that generates maximal informative trajectories for the mobile robots to observe their environment~\cite{hollinger2014sampling}. Their information gathering algorithm extends ideas from rapidly-exploring random graphs. Using branch and bound techniques, they achieve efficient optimization of information gathering while also allowing for operation in continuous space with motion constraints. Low et al.~\cite{cao2013multi} presented two approaches to solve IPP for \textit{in situ} active sensing of GP-based anisotropic spatial fields. Their proposed algorithms can trade-off active sensing performance with computational efficiency. Ling et al.~\cite{ling2016gaussian} proposed a nonmyopic adaptive GP planning framework endowed with a general class of Lipschitz continuous reward functions. Their framework can unify some active learning/sensing and Bayesian optimization criteria and offer practitioners flexibility to specify choices for defining new tasks. Tan et al.~\cite{inproceedingsKobilarov} introduced the receding-horizon cross-entropy trajectory optimization. Their focus was to sample around regions that exhibit extreme sensory measurements and much higher spatial variability, denoted as the region of interest. They used GP-UCB~\cite{srinivas2009gaussian} as the optimization criteria which helps in exploring initially and converging on regions of interest eventually.

A naive implementation of GP prediction scales poorly with increasing training dataset size. Sparse GP frameworks can overcome this problem by using only a subset of the data to provide accurate estimates. A state-of-the-art sparse GP variant is SPGP~\cite{snelson2006sparse, csato2002sparse, Seeger:161318, Hoang:2015:UFA:3045118.3045180}. The SPGP framework learns a pseudo subset that best summarizes the training data. Mishra et al. introduced an online IPP framework AdaPP~\cite{mishra2018online} which uses SPGP. 


\subsection{Sensing with Multiple Robots}
Mobile sensing can be made faster by distributing the task among several robots. Multi-robot systems can do complex tasks and have been widely used in environmental sampling~\cite{luo2016distributed}, coverage~\cite{cortes2004coverage}. Robots can use local communication or control laws to achieve some collective goals.

Singh et al.~\cite{singh2009efficient} proposed a sequential allocation strategy that uses GP regression, which can be used to extend any single robot planning algorithm for the multi-robot problem. Their procedure approximately generalizes any guarantees for the single-robot problem to the multi-robot case. However, the approach works only when MI is the optimization objective. Cao et al.~\cite{cao2013multi} presented two approaches along with their complexity analysis addressing a trade-off between active sensing performance and time efficiency. Luo et al.~\cite{luo-and-sycara-2018-107469} combined adaptive sampling with information-theoretic criterion into the coverage control framework for model learning and simultaneous locational optimization. They presented an algorithm allowing for collaboratively learning the generalized model of density function using a mixture of GPs with hyperparameters learned locally from each robot. Kemna et al.~\cite{kemna2017multi} created a decentralized coordination approach which first splits the environments into Voronoi partitions and makes each vehicle then run within their own partition. Other multi-robot approaches used in other domains, e.g. exploration and estimation with ground vehicles, include auction-based methods~\cite{zlot2002multi,simmons2000coordination, sheng2004multi} and spatial segregation, typically through Voronoi partitioning~\cite{soltero2012generating, marino2015decentralized}.

Tokekar et al.~\cite{tokekar2016sensor} presented a constant factor approximation algorithm for the case of accurately classifying each point in a spatial field. The first step in the algorithm is to determine potentially misclassified points and then to find a tour visiting neighborhoods of each potentially misclassified point. In this paper, we study a regression version of the problem where every point is of interest. We exploit the properties of GP and squared-exponential kernel to find a constant-factor approximation algorithm. Before the details of the algorithms, we review some relevant background and useful properties of GPs and MSE.
\subsection{Gaussian Processes}\label{sec:kernel}
In GP regression, the posterior variance at any test location $x$ is given by,
\begin{equation}\label{posterior_var}
\hat{\sigma}^2_{x|X} = k(x,x) - \mathbf{k}(x, X)\left[\mathbf{K}(X,X)+\omega^2\mathbf{I}\right]^{-1} \mathbf{k}(X, x), 
\end{equation}
\revone{where $\mathbf{K}(X, X)$ is the kernel matrix with entries, $\mathbf{K}_{pq}=k(x_p, x_q)=\sigma_0^2\exp\left(\frac{||x_p-x_q||^2}{2l^2}\right)$. Here, $\sigma_0^2,~l,~\omega^2$ are signal variance, length scale, and additive independent and identically distributed Gaussian measurement noise respectively~\cite{rasmussen2006gaussian}. We use the same value of length scale along each input dimension.} Note that the posterior variance at a particular location $x$ conditioned on set of observations at locations $X = \{x_1, \ldots , x_n\}$ does not depend on the actual observation but only on the locations from where the observations are collected. Multiple observations at a location is equivalent to that location being counted as many times as the number of measurements. The kernel is a function that measures the \emph{similarity} between two measurement locations~\cite{rasmussen2006gaussian}.

Since the posterior variance is a function of only the measurement locations, the posterior variance for all points in the environment can be computed \emph{a priori}, if the measurement locations are known, even without making any observations.
In many implementations~\cite{krause2008efficient,krause2007nonmyopic,meliou2007nonmyopic}, the hyperparameters for the kernel $k$ are tuned online as more data is gathered. As such, the hyperparameters may change with the observed data and the posterior variance will depend on the data observed, which may require adaptive planning. We assume that the hyperparameters are estimated \emph{a priori}. \revone{This is done using prior data from the same or similar environments or a pilot deployment over a smaller region, as described in~\cite{hollinger2012uncertainty,o2012gaussian,krause2008near}. Example applications are underwater inspection~\cite{hollinger2012uncertainty} and occupancy map building~\cite{o2012gaussian}, where prior data is used for determining hyperparameters before the actual deployment.} Nevertheless, one can perform sensitivity analysis of the presented algorithms by varying the hyperparameters~\cite{kennedy2001bayesian,oakley2004probabilistic}.

The posterior mean $\hat{\mu}_{x|X}$ at a location $x$ is given by a weighted linear combination of the observed data,
\begin{equation}\label{mean}
\hat{\mu}_{x|X} = \mathbf{k}(x, X)\left[\mathbf{K}(X,X)+\omega^2\mathbf{I}\right]^{-1}\mathbf{y}, 
\end{equation}
where $\mathbf{y} = \{y_1, \ldots, y_n\}$ denotes the observations at locations $X = \{x_1, \ldots, x_n\}$. 
\subsection{Mean Square Error}
MSE measures the expected squared difference between an estimator and the parameter the estimator is designed to estimate~\cite{hastie2005elements}. The MSE at a location $x$ for an estimator $\hat{f}$ is,
\begin{equation}\label{eq:bias_variance}
    MSE\left(\hat{f}(x)\right) = Var\left(\hat{f}(x)\right) + \left(E\revone{[}\hat{f}(x)-f(x)\revone{]}\right)^2,
\end{equation}
where the Equation~\ref{eq:bias_variance} is the bias-variance expression for the estimator. GP predicted value $\hat{f}(x)$ at a location $x$ is an unbiased estimator of the true value $f(x)$~\cite{yfantis1987efficiency} and has a normal distribution with mean given by Equation~\ref{mean}, and variance given by Equation~\ref{posterior_var}. \revone{Among all linear and non-linear estimators, GP is the best in terms of minimizing MSE~\cite{waagberg2016prediction, stein2012interpolation}.} Further, GPs are unbiased and hence, the MSE at a location $x$ is equal to the posterior variance of the predicted value, \ie,
\begin{equation}\label{eq:mse_postVar}
    MSE(x) = \hat{\sigma}^2_{x|X}.
\end{equation}
From Equation~\ref{eq:mse_postVar}, one can deduce that MSE for GPs is same as the posterior variance and hence, any guarantees for the posterior variance hold for MSE as well.

\section{Algorithms} \label{ch:algorithms}
In this section, we formally define the problems and the algorithms. We assume that the environment is a two dimensional area $U \subset \mathbb{R}^2$ and the underlying spatial field is an instance of a GP, $F$~\cite{yfantis1987efficiency}. $F$ has an isotropic covariance function of the form,
\begin{equation}\label{eq:kerenldef}
    C_Z(x, x') = \sigma_0^2\exp\left(-\frac{(x-x')^2}{2l^2}\right); \forall x, x'\in U,
\end{equation}
defined by a squared-exponential kernel where the hyperparameters $\sigma_0^2$ and $l$ are known \emph{a priori}.
Let $X$ denote the set of measurement locations within $U$ produced by an algorithm.
\begin{problem}[Placement]
Find the minimum number of measurement locations, such that the MSE at each location in $U$ is below $\Delta \revone{< \sigma_0^2}$, \ie,
\begin{equation*}
\begin{aligned}
& \underset{}{\text{minimize}}
& & |X|, \\
& \text{subject to}
& & MSE(x) \leq \Delta, \forall x \in U,
\end{aligned}
\end{equation*}
where $|X|$ is the cardinality of $X$ and $MSE(x)$ is the MSE at location $x$. \label{prob:placement}
\end{problem}

\begin{problem}[Mobile]\label{prob:mobile}
Find the minimum time trajectory for a mobile robot that obtains a finite set of measurements at one or more locations in $U$, such that the MSE at each location in $U$ \revone{is} less than $\Delta$, \ie, 
\begin{equation*}
\begin{aligned}
& \underset{}{\text{minimize}}
& & len(\tau) + \eta n(X),  \\
& \text{subject to}
& & MSE(x) \leq \Delta, \forall x \in U.
\end{aligned}
\end{equation*}
$\tau$ denotes the tour of the robot. Robot travels at unit speed, obtains one measurement in $\eta$ units of time and obtains $n(X)$ total measurements. 
\end{problem}
The robot may be required to obtain multiple measurements from a single location. Therefore, the number of measurements $n(X)$ can be more than $|X|$. \revone{For multiple robots}, their tours can start at the same starting location (often referred to as a \emph{depot}) or can start at different locations. In this paper, our focus is on the former case. The latter case is more appropriate when the robots must persistently monitor the environment.

\begin{problem}[multi-robot]
For $k$ robots starting from a given starting location (depot), design a set of trajectories that collectively obtain a finite set of measurements at one or more locations in $U$, such that the MSE at each location in $U$ is less than $\Delta$, \ie,
\begin{equation*}
\begin{aligned}
& \underset{}{\text{minimize}}
& & \underset{i\in\lbrace1,\ldots,k\rbrace}{\text{max}} len(\tau_i)  + \eta n(X_i),\\
& \text{subject to}
& & MSE(x)\leq\Delta, \forall x \in U.
\end{aligned}
\end{equation*}
$\tau_i$ denotes the tour of the $i^{th}$ robot and $X_i$ the subset of measurement locations covered by the $i^{th}$ robot. The robots travel at unit speed, obtain one measurement in $\eta$ units of time. $i^{th}$ robot obtains $n(X_i)$ total measurements.
\label{prob:multi-robot}
\end{problem}
The solution for Problem~\ref{prob:placement} is a subset of the solution to Problem~\ref{prob:mobile}. Further, the solution for Problem~\ref{prob:multi-robot} is derived from the solution for Problem~\ref{prob:mobile}.
The three algorithms build on top of each other by: $(1)$ finding a finite number of measurement locations for the robot; $(2)$ finding a tour to visit all the measurement locations; and $(3)$ splitting the tour from step $2$ in multiple sub-tours for $k$ robots. We exploit the properties of squared-exponential kernel to find the measurement locations. By knowing the value at a certain point within some tolerance, values at nearby points can be predicted albeit up to a larger tolerance. 

\subsection{Necessary and Sufficient Conditions}
We start by deriving necessary conditions on how far a test location can be from its nearest measurement location. A test location corresponds to a point in the environment where we would like to make a prediction.
\begin{lemma}[Necessary Condition] \label{lemma_var_tol_exp}
For any test location $x$, if the nearest measurement location is at a distance $r_{max}$ away, and,
\begin{equation}
r_{max} > l\sqrt[]{-\log\left(1- \frac{\Delta}{\sigma_0^2}\right)},
\end{equation}
then it is not possible to bring down the MSE below $\Delta$ at $x$.
\end{lemma}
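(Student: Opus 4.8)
The plan is to lower-bound the GP posterior variance $\hat{\sigma}^2_{x|X}$ at the test location $x$ using only the hypothesis that every measurement location in $X$ lies at distance at least $r_{max}$ from $x$, and to show this lower bound exceeds $\Delta$; since $\mathrm{MSE}(x)=\hat{\sigma}^2_{x|X}$ by \eqref{eq:mse_postVar}, this is exactly the negation of the statement that the MSE at $x$ can be driven below $\Delta$.

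First I would rearrange the hypothesis into the form in which it will actually be used. Because $\Delta<\sigma_0^2$, the quantity $-\log\!\left(1-\Delta/\sigma_0^2\right)$ is a well-defined nonnegative number, and $r_{max} > l\sqrt{-\log\!\left(1-\Delta/\sigma_0^2\right)}$ is equivalent, after squaring and exponentiating, to $\exp(-r_{max}^2/l^2) < 1-\Delta/\sigma_0^2$, i.e.\ to $\sigma_0^2\bigl(1-\exp(-r_{max}^2/l^2)\bigr) > \Delta$. The quantity $\sigma_0^2\bigl(1-\exp(-r_{max}^2/l^2)\bigr)$ is the one to recognize: specializing the rank-one case of \eqref{posterior_var} to the squared-exponential kernel, a single noiseless measurement at the nearest location $x_{near}$ with $\|x-x_{near}\|=r_{max}$ gives $\hat{\sigma}^2_{x|\{x_{near}\}} = k(x,x) - k(x,x_{near})^2/k(x_{near},x_{near}) = \sigma_0^2\bigl(1-\exp(-r_{max}^2/l^2)\bigr)$, and including measurement noise only increases this, since $\sigma_0^2 - \sigma_0^4\exp(-r_{max}^2/l^2)/(\sigma_0^2+\omega^2) \ge \sigma_0^2\bigl(1-\exp(-r_{max}^2/l^2)\bigr)$. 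So a single measurement at $x_{near}$ already fails to reach the threshold.

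The substantive step is to argue that enriching $X$ with further measurements — all still at distance $\ge r_{max}$ from $x$ — cannot push the posterior variance below $\Delta$ either; concretely, that the reduction term $\mathbf{k}(x,X)\bigl[\mathbf{K}(X,X)+\omega^2\mathbf{I}\bigr]^{-1}\mathbf{k}(X,x)$ is at most $\sigma_0^2\exp(-r_{max}^2/l^2)$ for every admissible configuration, so that $\hat{\sigma}^2_{x|X}\ge\sigma_0^2\bigl(1-\exp(-r_{max}^2/l^2)\bigr)>\Delta$. I would attack this through the kriging predictor: by the kernel's decay each $f(x_i)$ has correlation at most $\exp(-r_{max}^2/(2l^2))$ with $f(x)$, and the aim is to show that no linear combination $\sum_i w_i f(x_i)$ of the distant measurements can have correlation with $f(x)$ exceeding $\exp(-r_{max}^2/(2l^2))$, which bounds the variance explained and hence the reduction. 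A possible alternative route is to pass to the limit of conditioning on $f$ throughout $U\setminus B(x,r_{max})$ and use the smoothness of squared-exponential sample paths to retain a residual variance at $x$ that is still at least $\Delta$; monotonicity of posterior variance under adding observations then transfers the bound to every finite $X$.

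I expect this last step to be the crux. The per-measurement correlation bound is immediate, but ruling out amplification by linear combinations of distant measurements requires genuinely using the geometry of the squared-exponential kernel rather than the pointwise bound alone — it is precisely the place where the intuition that "distant measurements do not help" must be made rigorous. Everything else, namely the rank-one kernel computation and the algebraic rearrangement of the hypothesis, is routine.
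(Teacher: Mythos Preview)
Your plan diverges from the paper's at precisely the step you flag as the crux. The paper does not try to bound the reduction term $\mathbf{k}(x,X)\bigl[\mathbf{K}(X,X)+\omega^2\mathbf{I}\bigr]^{-1}\mathbf{k}(X,x)$ directly for an arbitrary admissible configuration $X$. Instead it invokes a monotonicity principle (``the closer the observation, the lower the predictive variance'', argued for a single observation in Appendix~\ref{app:appendix_one}) to assert that replacing \emph{all} $n$ measurements, wherever they sit, by $n$ repeated noisy measurements at the single nearest location $x_i$ can only lower $\hat\sigma_x^2$. This collapses the problem to one explicit computation: with $n$ observations at a single point at distance $r_{max}$ the posterior variance is $\sigma_0^2\bigl(1-\exp(-r_{max}^2/l^2)/(1+\omega^2/(n\sigma_0^2))\bigr)$, and sending $n\to\infty$ recovers exactly the threshold $\sigma_0^2\bigl(1-\exp(-r_{max}^2/l^2)\bigr)$ that you already isolated from the hypothesis. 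So where you propose to control arbitrary linear combinations of distant observations, the paper substitutes a coalesce-at-the-nearest-point reduction followed by an elementary closed-form limit.

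One concrete warning about your alternative route: passing to the limit of conditioning on $f$ throughout $U\setminus B(x,r_{max})$ will not yield a useful lower bound for the squared-exponential kernel. Sample paths of a GP with this kernel are real-analytic, so exact knowledge of $f$ on any open set determines $f(x)$ as well; the residual variance at $x$ after conditioning on the full complement of the ball is therefore zero, not something $\geq\Delta$, and monotonicity transfers nothing back to finite $X$. Your first route, via bounding the correlation achievable by linear combinations of distant values, is closer in spirit to what is needed, and you are right that the per-point bound $\exp(-r_{max}^2/(2l^2))$ does not by itself rule out amplification; the paper's coalescing shortcut is exactly what it uses in place of that argument.
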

\begin{proof}
Consider the posterior variance $\hat{\sigma}_x^2(n)$ at $x$ (which is also equal to the $MSE$ at $x$ from Equation~\ref{eq:mse_postVar}) after collecting $n$ measurements, possibly from different locations. A lower bound on $\hat{\sigma}_x^2(n)$ can be obtained by assuming that all measurements were collected at the nearest location $x_i$ to $x$. \revone{This is based on the fact that the closer the observation, lower the predictive variance. A mathematical proof for this is provided in Appendix~\ref{app:appendix_one}.} \revone{Let} the nearest measurement location $x_i$ is distance $r$ away from $x$. Assuming that all $n$ measurements were collected at $x_i$, lower bound for posterior variance at $x$ can be calculated using Equation~\ref{posterior_var}, 
\begin{align}\label{eq:varNearest}
\scalebox{0.7}{$
\hat{\sigma}^2_x(n) \geq \sigma_0^2 - 
  \begin{bmatrix}
    k(x, x_i),\ldots, k(x, x_i)
  \end{bmatrix}
  \begin{bmatrix}
    \sigma_0^2+\omega^2 & & \sigma_0^2 \\
    & \ddots &  \\
    \sigma_0^2 & & \sigma_0^2+\omega^2
  \end{bmatrix}^{-1}
  \begin{bmatrix}
    k(x, x_i)    \\
     \vdots  \\
    k(x, x_i)
  \end{bmatrix}$}.
\end{align}
It is worth mentioning that the square matrix in Equation~\ref{eq:varNearest} is of order $n\times n$ since there are $n$ measurements. Substituting the value for $k(x, x_i)=\sigma_0^2\exp\left(-\frac{r^2}{2l^2}\right)$ in Equation~\ref{eq:varNearest} and performing the required matrix operations, we get,
\revone{
\begin{align}\label{eq:varNearestOper}
\begin{split}
\hat{\sigma}^2_x(n) \geq \sigma_0^2 - 
 \frac{\sigma_0^4}{\omega^2}\exp{\left(\frac{-r^2}{l^2}\right)}
  \begin{bmatrix}
     1,\ldots, 1
  \end{bmatrix}\times \\
  \begin{bmatrix}
    1-\frac{1}{n+\frac{\omega^2}{\sigma_0^2}} & & \frac{-1}{n+\frac{\omega^2}{\sigma_0^2}} \\
    & \ddots &  \\
    \frac{-1}{n+\frac{\omega^2}{\sigma_0^2}} & & 1-\frac{1}{n+\frac{\omega^2}{\sigma_0^2}}
  \end{bmatrix}
  \begin{bmatrix}
    1    \\
     \vdots  \\
    1
  \end{bmatrix}.
  \end{split}
\end{align}
Therefore,
\begin{align}
\hat{\sigma}^2_x(n)&\geq\sigma_0^2-\frac{\sigma_0^4}{\omega^2}\left(n\left(1-\frac{1}{n+\frac{\omega^2}{\sigma_0^2}}\right)-\frac{n(n-1)}{n+\frac{\omega^2}{\sigma_0^2}}\right),\\
&\geq\sigma^2_0\left(1-\frac{\exp\left(-\frac{r_{max}^2}{l^2}\right)}{1+\frac{\omega^2}{n\sigma^2_0}}\right).\label{eq:varNearest-trans}
\end{align}}
Even if we had collected infinitely many measurements at the nearest location $x_i$, the posterior variance will still be lower bounded as,
\begin{align}
\hat{\sigma}^2_x(n)& > \lim_{n\to\infty} \sigma^2_0\left(1-\frac{\exp\left(-\frac{r_{max}^2}{l^2}\right)}{1+\frac{\omega^2}{n\sigma^2_0}}\right), \\
&\revone{=} \sigma^2_0\left(1-\exp\left(-\frac{r_{max}^2}{l^2}\right)\right)\label{eq:varNearest-trans-lim}.
\end{align}
If the posterior variance at $x$ even with the infinitely many measurements collected at the nearest measurement location $x_i$ (Equation~\ref{eq:varNearest-trans-lim}) is greater than $\Delta$, \ie,
\begin{align}\label{eq:rmax_Delta}
\begin{split}
        \Delta < \sigma^2_0\left(1-\exp\left(-\frac{r_{max}^2}{l^2}\right)\right) \implies \\  r_{max}  > l\sqrt{-\log\left(1-\frac{\Delta}{\sigma_0^2}\right)},
\end{split}
\end{align}
then it is not possible to bring down the MSE at $x$ below $\Delta$ in any circumstance.
\end{proof}
Next, we prove a sufficient condition that if every point in the environment where no measurement is obtained (\emph{test} location) is sufficiently close to a measurement location, then we can make accurate predictions at each point.
\begin{lemma}[Sufficient Condition] \label{lem:nearvariance}
For a test location $x \in U$, if there exists a measurement location $x_i\in X$, r distance away from $x$ with \revone{$n_\text{suff}$} measurements at $x_i$, such that,
\begin{equation}
r\leq l\sqrt{-\log\left(\left(1+\frac{\omega^2}{\revone{n_\text{suff}}\sigma_0^2}\right)\left(1-\frac{\Delta}{\sigma_0^2}\right)\right)},
\end{equation}
then GP predictions at $x$ will be accurate, \ie, MSE at $x$ will be smaller than $\Delta$.
\end{lemma}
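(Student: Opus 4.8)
The plan is to mirror the structure of the necessary-condition proof (Lemma~\ref{lemma_var_tol_exp}), but to run the inequality in the opposite direction: instead of lower-bounding the posterior variance at the test location $x$, I will \emph{upper}-bound it. The key observation is that the posterior variance can only decrease as we add more measurement locations (this is the monotonicity fact invoked in Lemma~\ref{lemma_var_tol_exp} and proved in Appendix~\ref{app:appendix_one}). Hence, if $x_i \in X$ is a measurement location at distance $r$ from $x$ with $n_\text{suff}$ measurements taken there, then $\hat{\sigma}^2_{x|X} \le \hat{\sigma}^2_x(n_\text{suff})$, where $\hat{\sigma}^2_x(n_\text{suff})$ is the posterior variance at $x$ conditioned on \emph{only} those $n_\text{suff}$ colocated measurements at $x_i$. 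So it suffices to show that this single-location posterior variance is at most $\Delta$.

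First I would write down $\hat{\sigma}^2_x(n_\text{suff})$ exactly, using Equation~\ref{posterior_var} with the $n_\text{suff}\times n_\text{suff}$ kernel matrix whose off-diagonal entries are all $\sigma_0^2$ and whose diagonal entries are $\sigma_0^2+\omega^2$ — this is precisely the matrix appearing in Equation~\ref{eq:varNearest}, except that here the bound in Equation~\ref{eq:varNearest} becomes an \emph{equality} because all measurements genuinely are at the single location $x_i$. Then I would reuse the matrix-inversion computation already carried out in Equations~\ref{eq:varNearestOper}--\ref{eq:varNearest-trans} (the inverse of $\sigma_0^2\mathbf{1}\mathbf{1}^\top + \omega^2\mathbf{I}$ via the Sherman--Morrison formula), which yields
\[
\hat{\sigma}^2_x(n_\text{suff}) = \sigma_0^2\left(1 - \frac{\exp\left(-\frac{r^2}{l^2}\right)}{1 + \frac{\omega^2}{n_\text{suff}\sigma_0^2}}\right).
\]
Setting this expression $\le \Delta$ and solving for $r$ gives
\[
\exp\left(-\frac{r^2}{l^2}\right) \ge \left(1 + \frac{\omega^2}{n_\text{suff}\sigma_0^2}\right)\left(1 - \frac{\Delta}{\sigma_0^2}\right),
\]
i.e. $r \le l\sqrt{-\log\left(\left(1+\frac{\omega^2}{n_\text{suff}\sigma_0^2}\right)\left(1-\frac{\Delta}{\sigma_0^2}\right)\right)}$, which is exactly the hypothesis. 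Reading the chain of implications backwards: the stated bound on $r$ forces $\hat{\sigma}^2_x(n_\text{suff}) \le \Delta$, and monotonicity then forces $\hat{\sigma}^2_{x|X} \le \Delta$, which by Equation~\ref{eq:mse_postVar} is the same as $MSE(x) \le \Delta$.

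The main obstacle — really the only nontrivial point — is justifying the monotonicity step cleanly, i.e. that conditioning on the full set $X$ (which includes $x_i$ with $n_\text{suff}$ repetitions plus possibly many other locations) cannot increase the posterior variance at $x$ relative to conditioning on the $n_\text{suff}$ colocated samples alone. This is the "more data never hurts" property of GP posterior variance; it follows from the fact that the Schur complement / conditional variance is monotone under adding rows and columns to the conditioning set, and it is the content of Appendix~\ref{app:appendix_one}, so I would simply cite it. One should also note for completeness that the argument needs $\left(1+\frac{\omega^2}{n_\text{suff}\sigma_0^2}\right)\left(1-\frac{\Delta}{\sigma_0^2}\right) \le 1$ for the logarithm's argument to lie in $(0,1]$ and the square root to be real — but this is guaranteed precisely when such an $n_\text{suff}$ exists, and in particular as $n_\text{suff}\to\infty$ the right-hand side tends to $1-\Delta/\sigma_0^2 < 1$ since $\Delta < \sigma_0^2$ by the standing assumption in Problem~\ref{prob:placement}, so the condition is satisfiable. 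Everything else is the routine algebra already performed in the preceding lemma, just re-used with the inequality reversed.
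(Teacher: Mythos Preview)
Your proposal is correct and follows essentially the same approach as the paper: upper-bound the posterior variance at $x$ by discarding all measurements except the $n_\text{suff}$ colocated ones at $x_i$ (the paper phrases this as ``Information never hurts''~\cite{cover2012elements}), reuse the closed form from Equation~\ref{eq:varNearest-trans}, and solve the inequality $\hat\sigma_x^2(n_\text{suff})\le\Delta$ for $r$. One small correction: the monotonicity you invoke---that conditioning on additional observations cannot increase the posterior variance---is \emph{not} what Appendix~\ref{app:appendix_one} proves (that appendix shows a single \emph{closer} observation yields lower variance than a single farther one); the paper instead cites~\cite{cover2012elements} for the ``more data never hurts'' step, so you should adjust your citation accordingly.
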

\begin{proof}
From Equation~\ref{eq:varNearest-trans}, we have an expression for variance \revone{of the posterior predictive distribution} at $x$. Taking the other measurement locations into consideration can not increase the posterior variance at $x$. Information never hurts~\cite{cover2012elements}! To prove the sufficiency, we consider \revone{$n_\text{suff}$} measurements at $x_i$ only and discard others knowing that the other locations can not increase the posterior variance at $x$. Bounding the expression in Equation~\ref{eq:varNearest-trans} with $\Delta$ results in,
\begin{align}
\Delta &\geq\sigma^2_0\left(1-\frac{\exp\left(-\frac{r^2}{l^2}\right)}{1+\frac{\omega^2}{\revone{n_\text{suff}}\sigma^2_0}}\right),\\
\exp\left(-\frac{r^2}{l^2}\right)& \geq \left(1+\frac{\omega^2}{\revone{n_\text{suff}}\sigma_0^2}\right)\left(1-\frac{\Delta}{\sigma_0^2}\right), \label{eq:domain-suff}\\
r&\leq l\sqrt{-\log\left(\left(1+\frac{\omega^2}{\revone{n_\text{suff}}\sigma_0^2}\right)\left(1-\frac{\Delta}{\sigma_0^2}\right)\right)}.
\end{align}
\end{proof}
Lemma~\ref{lem:nearvariance} gives a sufficient condition for GP predictions to be accurate at any given test location $x\in U$. The following lemma shows that a finite number of measurements $\revone{n_\text{suff} =} n_{\alpha}$, are sufficient to ensure predictive accuracy in a smaller disk of radius $\frac{1}{\alpha} r_{max}$ around $x_i$, where $\alpha > 1$ (Figure~\ref{figure_disks}).
\begin{figure}
  \centering
  \includegraphics[width=0.5\linewidth]{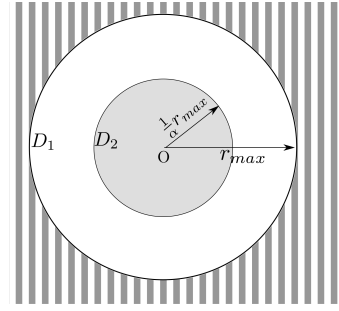}
  \caption{Collecting $n_{\alpha}$ measurements at O suffices to make accurate predictions at all points inside disk $D_2$ (Sufficient condition). No number of measurements at O can ensure predictive accuracy on points outside disk $D_1$ (Necessary condition). \label{figure_disks}}
\end{figure}

\begin{lemma}\label{lem:nalpha}
Given a disk of radius $\frac{1}{\alpha}r_{max}$ centered at $x_i$, $n_{\alpha}$ measurements at $x_i$ suffice to make accurate predictions for all points inside the disk, where,
\begin{equation}\label{eq:nalpha_cond}
n_{\alpha} \geq \left\lceil \frac{\omega^2}{\sigma_0^2}\frac{1}{\left(1-\frac{\Delta}{\sigma_0^2}\right)^{\frac{1}{\alpha^2}-1}-1}\right\rceil.
\end{equation}
\end{lemma}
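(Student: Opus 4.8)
The plan is to reduce this lemma directly to the sufficient condition of Lemma~\ref{lem:nearvariance} by plugging in the worst-case distance. Any test location $x$ inside the disk of radius $\frac{1}{\alpha}r_{max}$ centered at $x_i$ satisfies $r := \|x - x_i\| \le \frac{1}{\alpha}r_{max}$. By Lemma~\ref{lem:nearvariance}, it therefore suffices to choose $n_\alpha$ so that $\frac{1}{\alpha}r_{max}$ itself satisfies the Lemma~\ref{lem:nearvariance} bound with $n_\text{suff} = n_\alpha$, i.e.
\begin{equation*}
\frac{1}{\alpha}r_{max} \;\le\; l\sqrt{-\log\!\left(\left(1+\tfrac{\omega^2}{n_\alpha\sigma_0^2}\right)\left(1-\tfrac{\Delta}{\sigma_0^2}\right)\right)}.
\end{equation*}
Since both sides are nonnegative and all distances up to $r$ are then covered, this single inequality handles every point in the disk at once.

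First I would square both sides and substitute the value $r_{max}^2 = -l^2\log\!\left(1-\frac{\Delta}{\sigma_0^2}\right)$ from Lemma~\ref{lemma_var_tol_exp}, obtaining
\begin{equation*}
-\frac{l^2}{\alpha^2}\log\!\left(1-\tfrac{\Delta}{\sigma_0^2}\right) \;\le\; -l^2\log\!\left(\left(1+\tfrac{\omega^2}{n_\alpha\sigma_0^2}\right)\left(1-\tfrac{\Delta}{\sigma_0^2}\right)\right).
\end{equation*}
Dividing by $-l^2$ flips the inequality; exponentiating (using monotonicity of $\exp$) and then dividing by $1-\frac{\Delta}{\sigma_0^2} > 0$ (which is positive precisely because $\Delta < \sigma_0^2$) yields
\begin{equation*}
\left(1-\tfrac{\Delta}{\sigma_0^2}\right)^{\frac{1}{\alpha^2}-1} \;\ge\; 1 + \frac{\omega^2}{n_\alpha\sigma_0^2}.
\end{equation*}
Because $\alpha > 1$ gives $\frac{1}{\alpha^2} - 1 < 0$ while $0 < 1-\frac{\Delta}{\sigma_0^2} < 1$, the left-hand side exceeds $1$, so the quantity $\left(1-\frac{\Delta}{\sigma_0^2}\right)^{1/\alpha^2-1} - 1$ is strictly positive and we may safely rearrange to isolate $n_\alpha$, arriving at $n_\alpha \ge \frac{\omega^2}{\sigma_0^2}\frac{1}{\left(1-\frac{\Delta}{\sigma_0^2}\right)^{1/\alpha^2-1}-1}$; taking the ceiling gives an integer satisfying~\eqref{eq:nalpha_cond}.

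The argument is essentially a chain of equivalences, so there is no deep obstacle; the only place requiring care is tracking the direction of the inequality through the division by the negative factor $-l^2$ and confirming the sign of the denominator $\left(1-\frac{\Delta}{\sigma_0^2}\right)^{1/\alpha^2-1}-1$, which both hinge on the standing assumptions $\alpha > 1$ and $\Delta < \sigma_0^2$. I would also remark that the bound is monotone: as $\alpha \to 1^+$, $n_\alpha \to \infty$, consistent with the necessary condition (Lemma~\ref{lemma_var_tol_exp}) that points near the boundary of $D_1$ cannot be covered by any finite number of measurements, and as $\alpha$ grows, $n_\alpha$ decreases toward the trivial single-measurement regime.
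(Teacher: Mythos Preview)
Your proposal is correct and follows essentially the same route as the paper: apply Lemma~\ref{lem:nearvariance} at the worst-case distance $\frac{1}{\alpha}r_{max}$, substitute $r_{max}$ from Lemma~\ref{lemma_var_tol_exp}, and solve the resulting inequality for $n_\alpha$. Your version is in fact more careful than the paper's, which simply says ``squaring both sides and re-arranging for $n_\alpha$''---you explicitly track the inequality reversal when dividing by $-l^2$ and verify that the denominator $\left(1-\frac{\Delta}{\sigma_0^2}\right)^{1/\alpha^2-1}-1$ is strictly positive under the standing assumptions $\alpha>1$ and $\Delta<\sigma_0^2$.
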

\begin{proof}
We want a sufficiency condition on the number of measurements $n_{\alpha}$ inside a disk of radius $\frac{1}{\alpha}r_{max}$.
Lemma~\ref{lem:nearvariance} gives an upper bound on the radius of a disk such that all points inside the disk will be accurately predicted after $n_\alpha$ measurements at the center. We construct a disk ($D_2$ in Figure~\ref{figure_disks}), whose radius is equal to $\frac{1}{\alpha}r_{max}$ such that,
\begin{align}\label{eq:alpha_rmax_cond}
&\frac{1}{\alpha}r_{max} \leq l\sqrt{-\log\left(\left(1+\frac{\omega^2}{n_{\alpha}\sigma_0^2}\right)\left(1-\frac{\Delta}{\sigma_0^2}\right)\right)}.
\end{align}
Plugging in the value of $r_{max}$ from Lemma~\ref{lemma_var_tol_exp}, squaring both sides in Equation~\ref{eq:alpha_rmax_cond} and re-arranging for $n_\alpha$ gives the required bound stated in Lemma~\ref{lem:nalpha}. Ceiling function in Equation~\ref{eq:nalpha_cond} accounts for the fact that $n_\alpha$ is an integer.
\end{proof}

A packing of disks of radius $r_{max}$ gives a lower bound on the number of measurements required to ensure predictive accuracy. On the other hand, a covering of disks of radius $\frac{1}{\alpha}r_{max}$ gives us an upper bound on the number of measurements required. To solve Problem~\ref{prob:placement}, what remains is to relate the upper and lower bound and present an algorithm to place the disks of radii $\frac{1}{\alpha}r_{max}$.

\subsection{Placement of Sensors for Problem~\ref{prob:placement}}
We use an algorithm similar to the one presented by Tekdas and Isler~\cite{tekdas2010sensor} \revone{for stationary sensor placement in order to track a target using bearing sensors. In their case, the goal is to place sensors such that irrespective of where the target is in the environment, there are at least three sensors forming a triangle that get good quality bearing information of the target. The show how to cover the environment with disks and place a triangle of sensors within each disk. The setup is different from the one we have; however, we use a similar disk coverage strategy as a subroutine here.} The exact procedure is outlined in Algorithm~\ref{alg:stationary}.

\begin{algorithm}
\caption{\DC \label{alg:stationary}}
\begin{algorithmic}[1]
\Procedure{}{}
\BState \textbf{Input:}  An environment.
\BState \textbf{Output:}  Measurement locations.
\BState \textbf{begin} 
\begin{enumerate}
\item Design a set $\mathcal{X}$ of disks of radii $r_{max}$ which covers the environment and calculate a Maximal Independent Set (MIS) $\mathcal{I}$ of $\mathcal{X}$ greedily \ie, $\mathcal{I}=\textnormal{MIS}(\mathcal{X})$\label{step:mis}. 
\item Place disks of radii $3r_{max}$ concentric with disks in $\mathcal{I}$. Let the set of $3r_{max}$ radii disks is $\bar{\mathcal{X}}$ \label{step:xbar}.
\item Cover each disk in $\bar{\mathcal{X}}$ with disks of radii $\frac{1}{\alpha}r_{max}$ as shown in Figure~\ref{figure_cover} and label centers of all disks of radii $\frac{1}{\alpha}r_{max}$. 
\item Return all the labeled points in previous step as measurement locations.
\end{enumerate}

\BState \textbf{end procedure}
\EndProcedure 
\end{algorithmic}
\end{algorithm}
\begin{theorem}\label{theorem:stat}
\DC{} (Algorithm~\ref{alg:stationary}) gives an $18\alpha^2$-approximation for Problem \ref{prob:placement} in polynomial time.
\end{theorem}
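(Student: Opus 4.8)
The plan is to split the approximation bound into two independent parts: (i) feasibility of \DC{}'s output together with an upper bound on its size in terms of the maximal independent set $\mathcal I$, and (ii) a packing lower bound showing the optimum is at least $|\mathcal I|$; dividing then gives the factor $18\alpha^2$. Note first that $\Delta<\sigma_0^2$ makes $r_{max}=l\sqrt{-\log(1-\Delta/\sigma_0^2)}$ a well-defined positive radius and keeps $n_\alpha$ in Lemma~\ref{lem:nalpha} finite for $\alpha>1$. For feasibility I would show that the radius-$3r_{max}$ disks $\bar{\mathcal X}$ cover $U$: since $\mathcal X$ already covers $U$ with radius-$r_{max}$ disks, it suffices that every $D\in\mathcal X$ lies inside some disk of $\bar{\mathcal X}$. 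If $D\in\mathcal I$ this is immediate; otherwise, by maximality of $\mathcal I$, $D$ meets some $D'\in\mathcal I$, so their centres are at distance at most $2r_{max}$, and hence every point of $D$ is within $2r_{max}+r_{max}=3r_{max}$ of the centre of $D'$, i.e. inside the concentric $3r_{max}$-disk of $\bar{\mathcal X}$. Therefore the radius-$\tfrac{1}{\alpha}r_{max}$ disks built in Step~3 also cover $U$; placing $n_\alpha$ measurements at each of their centres, Lemma~\ref{lem:nalpha} (via Lemma~\ref{lem:nearvariance}) gives $MSE(x)\le\Delta$ for every $x$ inside such a disk, hence for all $x\in U$, so the output is feasible for Problem~\ref{prob:placement}.

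For the upper bound on the output size, each $3r_{max}$-disk of $\bar{\mathcal X}$ is contained in an axis-aligned square of side $6r_{max}$, and a square grid of spacing $\tfrac{\sqrt2}{\alpha}r_{max}$ — chosen so that the square inscribed in each radius-$\tfrac{1}{\alpha}r_{max}$ disk tiles the plane — places at most $\big(6r_{max}\,/\,(\sqrt2\, r_{max}/\alpha)\big)^2=(3\sqrt2\,\alpha)^2=18\alpha^2$ disk centres per square (up to routine rounding of the number of grid lines). Hence \DC{} outputs at most $18\alpha^2\,|\mathcal I|$ measurement locations. For the lower bound, let $X^{\ast}$ be any feasible placement; for the centre $c_i$ of each $D_i\in\mathcal I$, Lemma~\ref{lemma_var_tol_exp} forces some point of $X^{\ast}$ to lie within $r_{max}$ of $c_i$, i.e. inside $D_i$. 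Since the disks of $\mathcal I$ are pairwise non-overlapping, these witnesses are distinct, so $|X^{\ast}|\ge|\mathcal I|$. Combining, $|X|\le 18\alpha^2\,|\mathcal I|\le 18\alpha^2\,|X^{\ast}|$. Polynomial running time is clear: the greedy MIS in Step~1, the dilation in Step~2, and the grid covering in Step~3 are each polynomial in the size of the input (and in $\alpha$).

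The part needing the most care is the covering count. Turning the crude observation that the disk "fits in a $6r_{max}$-square" into the clean constant $18\alpha^2$ requires committing to a concrete grid (or hexagonal) pattern and checking that partially covered boundary cells do not inflate the bound; the ceilings appearing in the Lemma~\ref{lem:nalpha}-type rounding are harmless here, because they affect only the number of measurements per location, not the number of locations, which is what Problem~\ref{prob:placement} counts. A smaller point to pin down is the exact meaning of "independent" for $\mathcal I$ (disjoint interiors versus disjoint closures): it must be chosen so that the "distinct witnesses" step of the lower bound is valid, and taking disks with disjoint closures (or an infinitesimal perturbation) settles this without affecting the coverage argument for $\bar{\mathcal X}$.
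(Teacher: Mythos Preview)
Your proposal is correct and follows essentially the same approach as the paper: the lower bound $|X^\ast|\ge|\mathcal I|$ via Lemma~\ref{lemma_var_tol_exp} applied to the disk centres, the coverage of $U$ by $\bar{\mathcal X}$ via maximality of $\mathcal I$, and the count of $18\alpha^2$ small disks per $3r_{max}$ disk via the inscribed-square tiling of the circumscribed $6r_{max}$ square. If anything, your write-up is slightly more explicit than the paper's on the arithmetic behind $18\alpha^2$ and on the disjointness subtlety for the distinct-witnesses step.
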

\begin{proof}
Denote the set of measurement locations computed by the optimal algorithm to solve the Problem~\ref{prob:placement} by $X^{*}$. The function MIS in Step~\ref{step:mis} of Algorithm~\ref{alg:stationary} computes a maximally independent set of disks: the disks in $\mathcal{I}$ are mutually non-intersecting (independent) and every disk in $\mathcal{X}\backslash\mathcal{I}$  intersects \revone{at least one} disk in $\mathcal{I}$ (maximal). The set $\mathcal{I}$ can be computed by a simple polynomial greedy procedure: choose an arbitrary disk $d$ from $\mathcal{X}$, add it to $\mathcal{I}$, remove all disks in $\mathcal{X}$ which intersect $d$, and repeat the procedure until no such $d$ exists. 

An optimal algorithm will collect measurements from at least as many measurement locations as the cardinality of $\mathcal{I}$. This can be proved by contradiction. Suppose an algorithm visits measurement locations fewer than the number of disks in $\mathcal{I}$. In that case, there will exist at least one disk of radius $r_{max}$ in $\mathcal{I}$ which will not contain a measurement location. This means that there will be at least a point in that disk which will be more than $r_{max}$ away from each measurement location. From Lemma~\ref{lemma_var_tol_exp}, the robot can never make accurate predictions at that point and hence violating the constraint in Problem \ref{prob:placement}. Hence,
\begin{equation}\label{sensing:loc1}
 |\mathcal{I}| \ \leq \ |X^{*}|.
\end{equation}
Every disk in $\mathcal{X}$ intersects at least one disk in $\mathcal{I}$ and hence, lies within $3r_{max}$ of the center of a disk in $\mathcal{I}$. As a result, $\bar{\mathcal{X}}$ disks cover all the $\mathcal{X}$ disks and hence, the entire environment.\footnote{\revone{Note that $3r_{max}$ is the minimum radius of the bigger disks to guarantee that the entire environment is always covered. In specific instances, it may be possible to cover the environment with smaller than $3r_{max}$ by selecting the MIS using a well-designed heuristic. However, there are environments where $3r_{max}$ will be necessary.}}

Collecting measurements from $18\alpha^2$ locations inside a $3r_{max}$ disk suffice to make accurate predictions in that disk (satisfying the Problem~\ref{prob:placement} constraint for points belonging to that disk) as illustrated in Figure~\ref{figure_cover}. \DC{} collects measurement from $18\alpha^2$ such locations per disk in $\bar{\mathcal{X}}$. It collects measurements from a total of $18\alpha^2|\bar{\mathcal{X}}|$ locations, hence, satisfying the constraint for all points in the area covered by union of $\bar{\mathcal{X}}$ disks. Since, union of $\bar{\mathcal{X}}$ disks covers the entire environment, \DC{} satisfies the constraint for all points in the environment. 
Multiplying both sides of Equation~\ref{sensing:loc1} with $18\alpha^2$, we get, $18\alpha^2  |\mathcal{I}|   \leq  18\alpha^2  |X^{*}|$.
Note that $|\bar{\mathcal{X}}|=|\mathcal{I}|$. Hence,
\begin{align}\label{sensing:loc2}
 18\alpha^2  |\bar{\mathcal{X}}| \ & \leq \ 18\alpha^2 |X^{*}|,  \\
  n_{\DC{}} \ & \leq \ 18\alpha^2  |X^{*}|,
\end{align}
where, $n_{\DC{}}$ is the number of measurement locations for \DC{}. 
\begin{figure}
  \centering
  \includegraphics[width=0.5\linewidth]{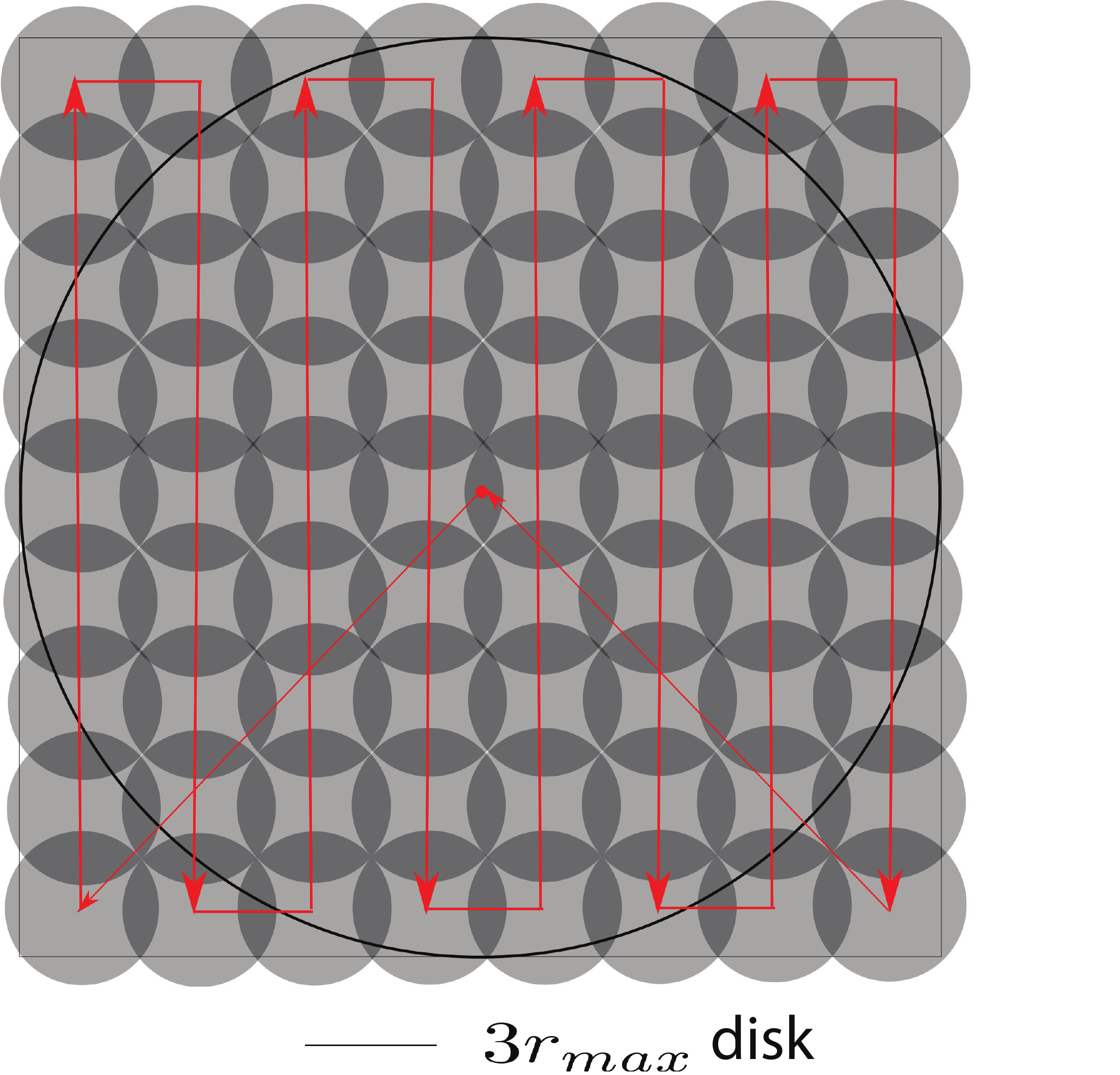}
  \caption{We cover each $3r_{max}$ radius disk with $\frac{1}{\alpha} r_{max}$ radii disks~\revone{(smaller gray disks)} in \revone{lawn-mower} pattern. $18\alpha^2$ disks suffice to cover the bigger disk. The locations of disks of radii $\frac{1}{\alpha}r_{max}$ inside a disk of radius $3r_{max}$ are obtained by covering the square circumscribing bigger disk with smaller squares inscribed in smaller disks. The centers of smaller squares coincide with the centers of smaller disks. \label{figure_cover}}
\end{figure}
\end{proof}

\subsection{Finding an Approximate Optimal Trajectory for Problem~\ref{prob:mobile}}
The algorithm for Problem~\ref{prob:mobile} builds on the algorithm presented in the previous section. The locations where measurements are to be made become the locations that are to visited by the robot. The robot must obtain at least $n_\alpha$ measurements at the center of each disk of radius $\frac{1}{\alpha}r_{max}$. A pseudo-code of the algorithm is presented in the Algorithm~\ref{alg:moblie}.
\begin{algorithm}
\caption{\DCT \label{alg:moblie}}
\begin{algorithmic}[1]
\Procedure{}{}
\BState \textbf{Input:} A set of measurement locations calculated from Algorithm~\ref{alg:stationary}.
\BState \textbf{Output:} An approximate optimal tour visiting all the measurement locations.
\BState \textbf{begin} 
\begin{enumerate}
\item Calculate approximate TSP tour visiting centers of the $3r_{max}$ radius disks (set $\bar{\mathcal{X}}$) disks. \label{step:tour}
\item Cover $\bar{\mathcal{X}}$ disk containing the starting location in \revone{lawn-mower} pattern visiting the centers of corresponding disks of radius $\frac{1}{\alpha}r_{max}$ and make $n_\alpha$ measurements at each center point. \label{step:indi}
\item Move to the center of next $\bar{\mathcal{X}}$ disk along the tour calculated in Step~\ref{step:tour}.\label{step:movement}
\item Repeat Steps~\ref{step:indi} and~\ref{step:movement} until all $\bar{\mathcal{X}}$ disks are covered.
\end{enumerate}
\BState \textbf{end procedure}
\EndProcedure
\end{algorithmic}
\end{algorithm}

\begin{theorem}
\DCT{} (Algorithm~\ref{alg:moblie}) yields a constant-factor approximation algorithm for Problem~\ref{prob:mobile} in polynomial time.
\end{theorem}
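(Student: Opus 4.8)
The plan is to split the objective value achieved by \DCT{} into its travel part $len(\tau_{\DCT})$ and its measurement part $\eta\,n(X_{\DCT})$, and to bound each of these against a separate lower bound on the optimum: the number of measurements any feasible tour must collect, and the length of any feasible tour. Feasibility of \DCT{} (MSE $\le \Delta$ everywhere) is inherited from Theorem~\ref{theorem:stat}: the $\tfrac1\alpha r_{max}$-disks cover the $3r_{max}$-disks, which cover $U$, and making $n_\alpha$ measurements at the center of each small disk makes its interior accurate by Lemma~\ref{lem:nalpha}. Counting, \DCT{} collects $n(X_{\DCT}) = 18\alpha^2 n_\alpha\,|\mathcal{I}|$ measurements, and its path consists of the TSP tour over the $|\mathcal{I}| = |\bar{\mathcal{X}}|$ disk centers together with, inside each $\bar{\mathcal{X}}$ disk, a boustrophedon detour through $18\alpha^2$ points all contained in a disk of radius $3r_{max}$; a direct estimate of the lawn-mower pattern of Figure~\ref{figure_cover} shows each such detour has length $O(\alpha r_{max})$, so $len(\tau_{\DCT}) \le \mathrm{TSP}(\text{centers}) + c_1\alpha r_{max}|\mathcal{I}|$, and $\mathrm{TSP}(\text{centers})$ is approximated within a factor $3/2$ by Christofides' algorithm in polynomial time.

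For the lower bounds, recall from the proof of Theorem~\ref{theorem:stat} that the $|\mathcal{I}|$ disks of the maximal independent set are pairwise disjoint and that each must contain a measurement location of the optimal solution $X^*$ (otherwise Lemma~\ref{lemma_var_tol_exp} is violated at that disk's center); hence $n(X^*)\ge |X^*|\ge|\mathcal{I}|$ and the optimal cost is at least $\eta|\mathcal{I}|$, which already yields $\eta\,n(X_{\DCT}) \le 18\alpha^2 n_\alpha\cdot\mathrm{OPT}$. For the travel term I would use two observations about an optimal tour $\tau^*$. First, $\tau^*$ passes through some point $q_j$ of each disjoint disk $D_j$; ordering the disks by their first visit along $\tau^*$ and rerouting each $q_j$ to the corresponding disk center costs at most $2r_{max}$ per disk, so $\mathrm{TSP}(\text{centers}) \le len(\tau^*) + 2r_{max}|\mathcal{I}|$. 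Second, the tube obtained by dilating $\tau^*$ by $2r_{max}$ has area $O(r_{max}\,len(\tau^*) + r_{max}^2)$ and must contain all $|\mathcal{I}|$ disjoint radius-$r_{max}$ disks, so $|\mathcal{I}|\pi r_{max}^2 = O(r_{max}\,len(\tau^*) + r_{max}^2)$, i.e. $r_{max}|\mathcal{I}| = O(len(\tau^*))$ up to a lower-order additive term; in the remaining case $|\mathcal{I}| = O(1)$ the detour cost $O(\alpha r_{max})$ is itself a constant that can be compared directly with the optimum.

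Chaining these inequalities, $len(\tau_{\DCT}) \le \tfrac32\big(len(\tau^*)+2r_{max}|\mathcal{I}|\big) + c_1\alpha r_{max}|\mathcal{I}| = O(\alpha)\,len(\tau^*) \le O(\alpha)\cdot\mathrm{OPT}$, and adding the measurement bound gives a total cost of at most $O(\alpha^2 n_\alpha)\cdot\mathrm{OPT}$. Since $\alpha$ is a fixed constant and, by Lemma~\ref{lem:nalpha}, $n_\alpha$ depends only on $\Delta/\sigma_0^2$, $\omega^2/\sigma_0^2$ and $\alpha$, this is a constant factor, and every step (a grid cover of $U$, a greedy MIS, the small-disk covers, a Christofides tour) runs in polynomial time.

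The step I expect to be the real obstacle is establishing $r_{max}|\mathcal{I}| = O(\mathrm{OPT})$: because the MIS disks, although disjoint, can be mutually tangent, the points where $\tau^*$ meets them need not be pairwise separated, so a naive "every tour edge is long'' argument fails and one must go through the area/tube packing bound — which in turn works cleanly only after the lower-order additive term and the small-$|\mathcal{I}|$ regime are handled. The other bookkeeping-heavy part is pinning down explicit constants for the per-disk lawn-mower length and for the neighborhood-shortcutting overhead so that the final approximation ratio is genuinely a constant.
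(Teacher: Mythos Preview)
Your proposal is correct and follows the same architecture as the paper: split into travel and measurement costs, lower-bound each via the MIS $\mathcal{I}$ and the necessary condition, upper-bound the algorithm's travel by an approximate TSP on the $\bar{\mathcal{X}}$-centers plus per-disk lawn-mower detours, and close the loop by showing $r_{max}|\mathcal{I}|=O(T^*_{travel})$. The differences are only tactical. The paper uses the $(1+\epsilon)$ Euclidean-TSP PTAS rather than Christofides, and bounds each in-disk lawn-mower more crudely as $\mathcal{O}(\alpha^2)r_{max}$ rather than your tighter $O(\alpha\,r_{max})$; neither choice matters for a constant-factor conclusion since $\alpha$ is fixed. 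More importantly for you: the step you flagged as the real obstacle, $r_{max}|\mathcal{I}|=O(len(\tau^*))$, the paper does not establish via a tube/area packing argument at all --- it simply invokes the known lower bound that any tour visiting $k$ pairwise non-overlapping disks of radius $r$ has length at least $0.24\,kr$ (cited from~\cite{tekdas2012efficient}), which immediately gives $0.24\,r_{max}|\mathcal{I}|\le T^*_{\mathcal{I}}\le T^*_{travel}$ with no additive term and no separate small-$|\mathcal{I}|$ regime. Your dilation/area argument is essentially how such a lemma is proved, so you are re-deriving it; if you are willing to cite it, the difficulty you anticipated vanishes.
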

\begin{proof}
From Theorem~\ref{theorem:stat}, we have a constant approximation bound on number of measurement locations. Let the time (travel and measurement time) taken by the optimal algorithm be $T^{*}_1$. Using notation from Theorem~\ref{theorem:stat}, we assume that the optimal traveling salesperson with neighborhoods (TSPN) time to visit disks in $\mathcal{I}$ be $T^{*}_{\mathcal{I}}$.  In TSPN, we are given a set of geometric neighborhoods, and the objective is to find the shortest tour that visits at least one point in each neighborhood (disks in this case)~\cite{tokekar2016sensor}. The optimal algorithm will visit at least all disks once in $\mathcal{I}$ which gives the following minimum bounds on the optimal travel time ($T^{*}_{travel}$) and optimal measurement time ($T^{*}_{measure}$),   
\begin{align} \label{inequality_tspn}
T^{*}_{\mathcal{I}}   \leq  T^{*}_{travel}; \eta|\mathcal{I}|   \leq  T^{*}_{measure}.
\end{align}
Let the optimal time to visit the centers of disks in $\mathcal{I}$ be $T^{*}_{\mathcal{I}^C}$. An upper bound on $T^{*}_{\mathcal{I}^C}$ can be established by the fact that upon visiting each disk, the robot can visit the center of that disk and return back by adding an extra tour length of $2r_{max}$, \ie, a detour of maximum length $|\mathcal{I}| \times 2 r_{max}$ for all disks in $\mathcal{I}$. As a result: $T^{*}_{\mathcal{I}^C} \leq T^{*}_{\mathcal{I}} + 2r_{max}|\mathcal{I}|$.
Using inequality from Equation~\ref{inequality_tspn}: $T^{*}_{\mathcal{I}^C}  \leq T^{*}_{travel} + 2 r_{max} |\mathcal{I}|$.
For any disk in $\bar{\mathcal{X}}$, the length of \revone{lawn-mower} path starting from the its center and return back (Figure~\ref{figure_cover}) after visiting all center points of $\frac{1}{\alpha}r_{max}$ disks will be of order $\mathcal{O}(\alpha^2)r_{max}$. Hence, the total travel time for \DCT{} is: $T_C+|\mathcal{I}|\mathcal{O}(\alpha^2)r_{max}$, where $T_C$ is the $(1+\epsilon)$-approximated time with respect to the optimal TSP tour returned by the $(1+\epsilon)$-approximation algorithm to visit the centers of the disks in $\bar{\mathcal{X}}$ (or $\mathcal{I}$ disks since they are concentric).
$T_C$ can be calculated in polynomial time~\cite{646145} having bounds: $T_C   \ \leq \ (1+\epsilon) \ T^{*}_{\mathcal{I}^{C}}$,
with $T^{*}_{\mathcal{I}^{C}}$ being the optimal TSP time to visit the centers of $\bar{\mathcal{X}}$ disks. Measurement time for \DCT{} is $18\alpha^2\eta n_2|\mathcal{I}|$.
Hence, the total time $T_{alg}^1$ for \DCT{} is,
\begin{align}
 T_{alg}^1 &= T_C+|\mathcal{I}|\mathcal{O}(\alpha^2)r_{max} + 18\alpha^2\eta n_2|\mathcal{I}| \\
 &\leq (1+\epsilon) \ T^{*}_{\mathcal{I}^{C}}+\mathcal{O}(\alpha^2)r_{max}|\mathcal{I}| + 18\alpha^2\eta n_2|\mathcal{I}|,\\
\begin{split}
& \leq (1+\epsilon) \ \left(T^{*}_{travel} + 2 r_{max} |\mathcal{I}|\right)+\mathcal{O}(\alpha^2)r_{max}|\mathcal{I}| \\
              &\qquad \hspace{1.3in} + 18\alpha^2\eta n_2|\mathcal{I}|.
\end{split}
\label{eq:costBound}
\end{align}
\revone{$n_2$ is the number of sufficient measurements required inside a disk of radius $\frac{1}{2}r_{max}$ (Lemma~\ref{lem:nalpha} with $\alpha=2$).} Length of any tour that visits $k$ non-overlapping equal size disks of radii $r$ is at least $0.24 k r$~\cite{tekdas2012efficient}, which gives $0.24r_{max} |\mathcal{I}|  \leq T^{*}_{\mathcal{I}}$. Combining this result with Equation~\ref{inequality_tspn} modifies the bounds in Equation~\ref{eq:costBound} as,
\begin{align}
\begin{split}
 T_{alg}^1 & \leq \left((1+\epsilon)\left(1+\frac{2}{.24}\right)+\frac{\mathcal{O}(\alpha^2)}{.24}\right)T^{*}_{travel} \\
              &\qquad \hspace{1.4in} + 18\alpha^2n_2 T^{*}_{measure},
\end{split}
 \\
&\scalebox{0.95}{$\leq \textnormal{max}\left(9.33(1+\epsilon)+\frac{82}{.24}, 72n_2\right) \left(T^{*}_{travel}+T^{*}_{measure}\right),\label{eq:maxRHS}$} \\
& \leq \textnormal{max}\left(9.33(1+\epsilon)+\frac{\mathcal{O}(\alpha^2)}{.24}, 18\alpha^2 n_2\right) T^{*}_1 \\
& \leq c T^{*}_1,\label{eq:finalalg2}
\end{align}
where $c$, a constant, is larger one of the two quantities inside the bracket in Equation~\ref{eq:finalalg2}.
\end{proof}
\revone{Note that the Algorithm~\ref{alg:moblie} collects same number of measurements from each measurement location. There may be another algorithm that collects different number of measurements from different locations which may result in better performance. This modification is an avenue for future work.}

\subsection{Finding an Approximate Optimal Trajectory for Problem~\ref{prob:multi-robot}}
When one robot can not handle a large territory, to speed up the task, $k$ robots can be sent to collectively visit1 all measurement locations. A natural objective is to ensure that no robot has too large of a task. Hence, We choose our optimization criterion as minimizing the maximum of the $k$-robot tour costs. This is equivalent to minimizing the time taken by the last robot to return back to the common starting location. Our proposed algorithm only works if the robots start and return back to the same location -- called depot. Any measurement location can be chosen as the depot but in our case, we assume that the robots start from and return back to a pre-defined depot.

We now describe an algorithm which employs a tour-splitting heuristic to plan for $k$ robots. We modify the heuristic proposed by Frederickson et al.~\cite{4567906} to account for the measurement time, and not just the travel time.

Let the output tour of the robot from Algorithm~\ref{alg:moblie} be denoted by $\tau$ and $l_{max}$ be the distance of farthest measurement location from the depot.
\begin{algorithm}
\caption{\kDCT \label{alg:multimoblie}}
\begin{algorithmic}[1]
\Procedure{}{}
\BState \textbf{Input:} Tour calculated from Algorithm~\ref{alg:moblie}, Depot location $x_1$.
\BState \textbf{Output:}  $k$ approximate optimal paths visiting all measurement locations collectively.
\BState \textbf{begin}
\begin{enumerate}
\item For $j^{th}$ robot, $1\leq j <k$, find the last measurement location $\revone{x}_{p(j)}$ such that the time taken to travel from $\revone{x}_1$ to $\revone{x}_{p(j)}$ along $\tau$ is not greater than $\frac{j}{k}\left(T_{alg}^1-\left(2l_{max}+\eta n_2\right)\right)+\left(l_{max}+\eta n_2\right)$.
\item Obtain \textit{k} subtours as $R_1=(x_1,\ldots,\revone{x}_{p(1)},\revone{x}_1)$, $R_2=(\revone{x}_1,\revone{x}_{p(1)+1},\ldots,\revone{x}_{p(2)},\revone{x}_1)$, $\ldots$ $R_k=(\revone{x}_1,\revone{x}_{p(k-1)+1}\ldots,\revone{x}_n,\revone{x}_1)$.
\end{enumerate}

\BState \textbf{end procedure}
\EndProcedure
\end{algorithmic}
\end{algorithm}
\begin{theorem}\label{theorem:multi}
\kDCT{} (Algorithm~\ref{alg:multimoblie}) yields a $(c+2)$ approximation algorithm for Problem~\ref{prob:multi-robot} in polynomial time, given a $c$-approximation algorithm for Problem~\ref{prob:mobile}.
\end{theorem}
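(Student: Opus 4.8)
The plan is to adapt the classical tour‑splitting heuristic of Frederickson, Hecht and Kim to a setting in which a robot additionally spends $\eta n_2$ time at every location it services, and then to charge the lengths of the resulting subtours against two independent lower bounds on the optimum of Problem~\ref{prob:multi-robot}, which I will call $\mathrm{OPT}_k$. First I would fix notation: write the \DCT{} tour as $\tau=(x_1,x_2,\dots,x_n,x_1)$ with $x_1$ the depot (Algorithm~\ref{alg:moblie} starts in the $\bar{\mathcal X}$ disk containing the depot), and let $C_i$ be the time, travel plus measurement, spent along $\tau$ from $x_1$ up to and including the $n_2$ measurements at $x_i$, so that $C_n+\mathrm{dist}(x_n,x_1)=T_{alg}^1$ and, since $\tau$ must reach $x_{far}$ (at distance $l_{max}$) and measure there, $T_{alg}^1\ge 2l_{max}+\eta n_2$. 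With $\Theta_j:=\tfrac{j}{k}\big(T_{alg}^1-(2l_{max}+\eta n_2)\big)+l_{max}+\eta n_2$, the split index $p(j)$ chosen by \kDCT{} is the largest $i$ with $C_i\le\Theta_j$. I would first record that the blocks $\{x_{p(j-1)+1},\dots,x_{p(j)}\}$ partition $\{x_2,\dots,x_n\}$, that each $R_j$ is a closed walk from the depot, that collectively the $R_j$ visit every measurement location and make $n_2$ measurements at each one, so the MSE constraint holds everywhere exactly as in the proof of Theorem~\ref{theorem:stat}, and that the whole procedure is a single linear scan of $\tau$, hence polynomial.

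The second step is the subtour bound. The cost of $R_j$ for $1<j<k$ is the direct leg $x_1\to x_{p(j-1)+1}$ (at most $l_{max}$), plus the $\eta n_2$ measurements at $x_{p(j-1)+1}$ (these are not counted in $C_{p(j)}-C_{p(j-1)+1}$), plus the arc of $\tau$ from $x_{p(j-1)+1}$ to $x_{p(j)}$ of length $C_{p(j)}-C_{p(j-1)+1}$, plus the return leg $x_{p(j)}\to x_1$ (at most $l_{max}$). Because $p(j)$ is the last index with $C\le\Theta_j$ and $p(j-1)+1$ the first index with $C>\Theta_{j-1}$, the arc is at most $\Theta_j-\Theta_{j-1}=\tfrac1k\big(T_{alg}^1-(2l_{max}+\eta n_2)\big)$. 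The boundary blocks $j=1$ and $j=k$ are checked directly against $\Theta_1$ and against $T_{alg}^1-\Theta_{k-1}$ and give the same estimate. Hence for every $j$,
\begin{equation*}
\mathrm{len}(\tau_j)+\eta\,n(X_j)\;\le\;\frac{T_{alg}^1-(2l_{max}+\eta n_2)}{k}+2l_{max}+\eta n_2\;\le\;\frac{T_{alg}^1}{k}+2l_{max}+\eta n_2 .
\end{equation*}

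The third step produces the two lower bounds. (i) Concatenating the $k$ optimal subtours at the depot yields a single closed tour that makes all the measurements the optimal $k$-robot solution makes, hence a feasible solution of Problem~\ref{prob:mobile} of cost at most $k\,\mathrm{OPT}_k$; therefore the optimum $T_1^{*}$ of Problem~\ref{prob:mobile} satisfies $T_1^{*}\le k\,\mathrm{OPT}_k$, and since $T_{alg}^1\le c\,T_1^{*}$ by hypothesis we get $T_{alg}^1/k\le c\,\mathrm{OPT}_k$. (ii) By the Necessary Condition (Lemma~\ref{lemma_var_tol_exp}) even the optimal multi-robot solution must place a measurement location within $r_{max}$ of every point of $U$; since \DCT{}'s far disk lies within $O(r_{max})$ of $U$, the robot servicing that periphery must make the round trip from the depot together with the measurement time forced there, which yields $2l_{max}+\eta n_2\le 2\,\mathrm{OPT}_k$. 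Combining with the subtour bound gives $\max_j\big(\mathrm{len}(\tau_j)+\eta\,n(X_j)\big)\le c\,\mathrm{OPT}_k+2\,\mathrm{OPT}_k=(c+2)\,\mathrm{OPT}_k$, as claimed.

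The routine parts are the splitting inequalities and lower bound (i); the delicate part, and the one I expect to be the main obstacle, is lower bound (ii). The quantities $l_{max}$ and $\eta n_2$ are artifacts of \DCT{}'s fixed covering and fixed per‑location measurement count, whereas the optimum is free to use any measurement locations and to distribute measurements over several locations and several robots. Making the inequality $2l_{max}+\eta n_2\le 2\,\mathrm{OPT}_k$ rigorous therefore requires (a) carefully bounding the gap between $l_{max}$ and the true radius of $U$ about the depot, absorbing the resulting $O(r_{max})$ slack, and (b) a separate argument that the optimum cannot bring the MSE below $\Delta$ near the most isolated region of $U$ without incurring measurement time comparable to $\eta n_2$ somewhere. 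I would organize the write‑up so that any unavoidable lower‑order terms are folded into the statement, and everything else proceeds by the bookkeeping above.
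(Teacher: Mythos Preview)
Your approach is exactly the paper's: Frederickson--Hecht--Kim splitting adapted to include per-location measurement time, then charging against the two lower bounds $T_1^{*}\le k\,T_k^{*}$ and $2l_{max}+\eta n_2\le T_k^{*}$. Your subtour bookkeeping is in fact slightly tighter than the paper's (the paper arrives at $\tfrac{T_{alg}^1}{k}+(2l_{max}+\eta n_2)\bigl(2-\tfrac1k\bigr)$ rather than your $\tfrac{T_{alg}^1}{k}+2l_{max}+\eta n_2$, which still gives $(c+2-\tfrac1k)\le(c+2)$), and lower bound (i) is handled identically.

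On lower bound (ii), however, you should know that the paper does \emph{not} carry out the careful argument you outline. It simply asserts that ``it is natural to think that at least one robot will have to go to the farthest location from the depot and come back from there after collecting $n_2$ measurements,'' and takes $2l_{max}+\eta n_2\le T_k^{*}$ (not merely $\le 2T_k^{*}$) as given. Your reservations are well placed: $l_{max}$ is the distance to \DCT{}'s farthest measurement location, which can sit up to $3r_{max}$ outside $U$, and nothing in the problem forces the optimum to spend $\eta n_2$ measurement time at any single point. The published proof does not address either issue. So your plan---absorb an $O(r_{max})$ slack when comparing $l_{max}$ to the radius of $U$ about the depot, and argue separately that the optimum must incur measurement time comparable to $\eta n_2$ somewhere---is the right way to make (ii) rigorous, but you will not find a model for it in the paper; the paper's proof is complete only modulo the gap you already identified.
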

\begin{proof}
First, we prove that the time taken along every subtour is bounded and eventually show that the bound is within a constant factor of the optimal time. With $k$ robots, let the subtours for $1^{st}$ and $k^{th}$ robot are $\revone{x}_1 \rightarrow \revone{x}_{p(1)}\xrightarrow{} \revone{x}_1$ and $\revone{x}_1 \rightarrow \revone{x}_{p(k-1)+1}\rightarrow \revone{x}_1$ respectively (an example with $k=5$ is shown in Figure~\ref{fig:tour_split}). Subtours for the remaining robots can be denoted by $\revone{x}_1 \rightarrow \revone{x_{p(j-1)+1}} \rightarrow \revone{x}_{p(j)}\rightarrow \revone{x}_1$, where $1<j<k$.

Substituting $j=1$ in Algorithm~\ref{alg:multimoblie}, the time to travel from $\revone{x}_1$ to $\revone{x}_{p(1)}$ along $\tau$, $T(\revone{x}_1\xrightarrow{\tau} \revone{x}_{p(1)})$ is no greater than $\frac{1}{k}\left(T_{alg}^1-\left(2l_{max}+\eta n_2\right)\right)+\left(l_{max}+\eta n_2\right)$. Time for the first subtour is hence bounded by $T(\revone{x}_1\xrightarrow{\tau} \revone{x}_{p(1)})+T(\revone{x}_{p(1)}\xrightarrow{} \revone{x}_1)$, \ie, $\frac{1}{k}\left(T_{alg}^1-\left(2l_{max}+\eta n_2\right)\right)+\left(l_{max}+\eta n_2\right)+l_{max}$.
From the condition in Algorithm~\ref{alg:multimoblie}, we know that $\revone{x}_{p(k-1)}$ is the last location such that,
\begin{align}
\begin{split}
    T\left(\revone{x}_1\xrightarrow{\tau} \revone{x}_{p(k-1)}\right) \leq \frac{k-1}{k}\left(T_{alg}^1-\left(2l_{max}+\eta n_2\right)\right)+ \\ \left(l_{max}+\eta n_2\right),
\end{split}
\end{align}
and hence,
\begin{align}
\begin{split}
T\left(\revone{x}_1\xrightarrow{\tau} \revone{x}_{p(k-1)+1}\right) \geq \frac{k-1}{k}\left(T_{alg}^1-\left(2l_{max}+\eta n_2\right)\right)+ \\ \left(l_{max}+\eta n_2\right).
\end{split}
\end{align}
Subtracting both sides from $T^1_{alg}$,
\begin{align}
\begin{split}
T^1_{alg}-T\left(\revone{x}_1\xrightarrow{\tau} \revone{x}_{p(k-1)+1}\right) \leq & T^1_{alg}- \frac{k-1}{k}\Big(T_{alg}^1-(2l_{max}\\
&+\eta n_2)\Big)+ \left(l_{max}+\eta n_2\right),
\end{split}
\end{align}
which gives,
\begin{equation}
\scalebox{0.85}{$T\left(\revone{x}_{p(k-1)+1}\xrightarrow{\tau} \revone{x}_1\right) \leq \frac{1}{k}\Big(T_{alg}^1-(2l_{max}
+\eta n_2)\Big)+ \left(3l_{max}+2\eta n_2\right)$},
\end{equation}
and hence, time for the last subtour is bounded by $T\left(\revone{x}_1\xrightarrow{}\revone{x}_{p(k-1)+1} \right)+T\left(\revone{x}_{p(k-1)+1}\xrightarrow{\tau} \revone{x}_1\right)$, \ie, $\frac{1}{k}\Big(T_{alg}^1-(2l_{max}+\eta n_2)\Big)+ \left(3l_{max}+2\eta n_2\right)+l_{max}$. Similar inequalities can be derived for remaining subtours as follows.
For $1\leq j\leq k-2$, following inequalities hold from Algorithm~\ref{alg:multimoblie}, 
\begin{equation}
\scalebox{0.9}{$T(\revone{x}_1\xrightarrow{\tau} \revone{x}_{p(j)+1}) \geq \frac{j}{k}\left(T_{alg}^1-\left(2l_{max}+\eta n_2\right)\right)+\left(l_{max}+\eta n_2\right),\label{eq:lowerBound}$}
\end{equation}
\begin{equation}
\scalebox{0.9}{$T(\revone{x}_1\xrightarrow{\tau} \revone{x}_{p(j+1)}) \leq \frac{j+1}{k}\left(T_{alg}^1-\left(2l_{max}+\eta n_2\right)\right) +\left(l_{max}+\eta n_2\right). \label{eq:upperBound}$}
\end{equation}
Subtracting Equation~\ref{eq:lowerBound} from \ref{eq:upperBound} results in,
\begin{align}
    T(\revone{x}_{p(j)+1}\xrightarrow{\tau} \revone{x}_{p(j+1)})\leq \frac{1}{k} \left(T_{alg}^1-\left(2l_{max}+\eta n_2\right)\right),
\end{align}
\ie, the time taken along remaining subtours, $T\left(\revone{x}_1 \rightarrow \revone{x}_{p(j-1)+1} \xrightarrow{\tau} \revone{x}_{p(j)}\rightarrow \revone{x}_1\right)$, where $1<j<k$, is also bounded by $\frac{1}{k} \left(T_{alg}^1-\left(2l_{max}+\eta n_2\right)\right)+2l_{max}$.
Hence, we can conclude that \revone{the} time taken along each subtour does not exceed $\frac{1}{k}\left(T_{alg}^1-\left(2l_{max}+\eta n_2\right)\right)+\left(4l_{max}+2\eta n_2\right)$. 

Let $T_{alg}^k$ be the time taken for largest of the \textit{k} subtours generated by the Algorithm~\ref{alg:multimoblie}, and $T^{*}_k$ be the cost of the largest subtour in an optimal solution to Problem~\ref{prob:multi-robot}.
We have,
\begin{align}
    T_{alg}^k &\leq  \frac{1}{k}\left(T_{alg}^1-\left(2l_{max}+\eta n_2\right)\right)+\left(4l_{max}+2\eta n_2\right) \\
& \leq\frac{T_{alg}^1}{k}+\left(2l_{max}+\eta n_2\right)\left(2-\frac{1}{k}\right).
\end{align}
From the triangle inequality, $T^{*}_k\geq \frac{1}{k}T^{*}_1$. It is natural to think that at least one robot will have to go to the farthest location from the depot and come back from there after collecting $n_2$ measurements which gives us a lower bound on the output of the optimal algorithm, \ie, $2l_{max}+\eta n_2\leq T_k^{*}$. Combining these results with Equation~\ref{eq:finalalg2}, we get,
\begin{align}
 T_{alg}^k & \leq \frac{c}{k}T^{*}_1+T^{*}_k\left(2-\frac{1}{k}\right)\\
&\leq \left(c+2-\frac{1}{k}\right)T^{*}_k \\
&\leq \left(c+2\right)T^{*}_k.
\end{align}

\begin{figure}
  \centering
  \includegraphics[width=0.4\textwidth]{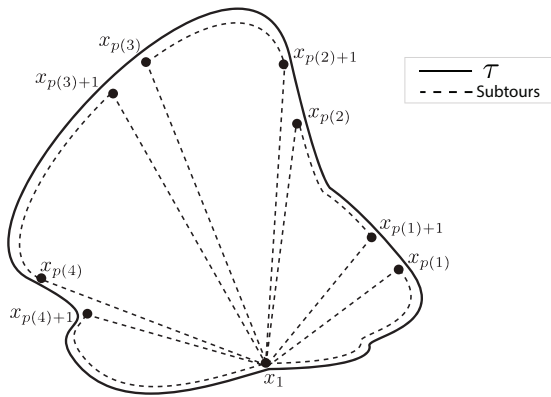}
  \caption{Splitting the tour for one robot ($\tau$) into $5$ subtours. \revone{The solid line shows an initial single robot tour $\tau$ starting and ending at $x_1$. The dotted lines denote the individual robot subtours starting and ending at $x_1$ obtained by splitting the single tour $\tau$.}
\label{fig:tour_split}}
\end{figure}
\end{proof}

\section{Empirical Evaluation} \label{ch:results}
\revone{In this section, we report results from empirical evaluation of the theoretical results. We show qualitative and quantitative comparison of our algorithms with other baseline strategies through simulations using precision agriculture as our motivating example.}

\paragraph*{Dataset}
\revone{We use a real-world dataset~\cite{20023117691}, collected from a farm, consisting} of organic matter (OM) measurements manually collected from several hundred locations within the farm. The maximum and minimum values of the underlying field are $54.6$~\textit{parts per million (ppm)} and $25.4$ \textit{ppm} respectively shown by the colorbar (Figure~\ref{fig:actual_OM}). \revone{Taking this into account, we set $\Delta$ to be equal to $4$ which is $10\%$ of the average of maximum and minimum field values. We use a simulated sensor that returns a noisy version of the ground truth measurement with an additive Gaussian noise of variance, $\omega = 0.0361$. }

The squared-exponential kernel has three hyperparameters: length scale $(l)$, signal variance $(\sigma^2_0)$, and noise variance ($\omega^2$). \revone{The values of $l$, $\sigma_0$, and $\omega^2$ were estimated to be $8.33$ meter, $12.87$, and $0.0361$ respectively} by minimizing the negative log-marginal likelihood of the manually collected data. We assume that the estimated values are the true values of the kernel hyperparameters. \revone{In a general application where some prior data is available, the hyperparameters can be estimated in a similar way. We used the GPML toolbox to perform the necessary GP operations~\cite{rasmussen2010gaussian}.}

\subsection{Qualitative Example}
\paragraph*{Stationary Sensor Placement}
The final predicted OM content after performing inference using the measurements obtained is shown in Figure~\ref{predicted:final}. \revone{This predicted OM content is the average of ten trials. In each trial, the reported value by the sensor can be different even at the same location because of the simulated noise. Figure~\ref{fig:prediction_Error} shows a plot of the prediction error averaged over those ten trials.} \revone{We observe that the average prediction error is below $\Delta=4~ppm$ at each location in the environment. It is important to mention that average prediction error is not same as the MSE. The MSE at a location is the expected squared error in prediction at that location. The average prediction error,  referred as empirical MSE in the following text, is an empirical estimate of that expectation. As the number of runs increases, the empirical MSE will converge to the actual MSE. We verify it through simulations and report the results later. Our theoretical guarantees hold for the MSE and not for the empirical MSE. However, one can expect that the empirical MSE will also be less than the pre-defined threshold $\Delta$ given enough trials.} The regions where the OM content changes sharply tend to be more erroneously predicted as shown by the lighter colored regions in Figure~\ref{fig:prediction_Error}. This can be attributed to the inherent smoothness assumptions of a squared-exponential kernel.
\begin{figure*}
\centering
\subfigure[The actual organic matter content \revone{(\textit{ppm})}.]{\label{fig:actual_OM}\includegraphics[width=0.32\textwidth]{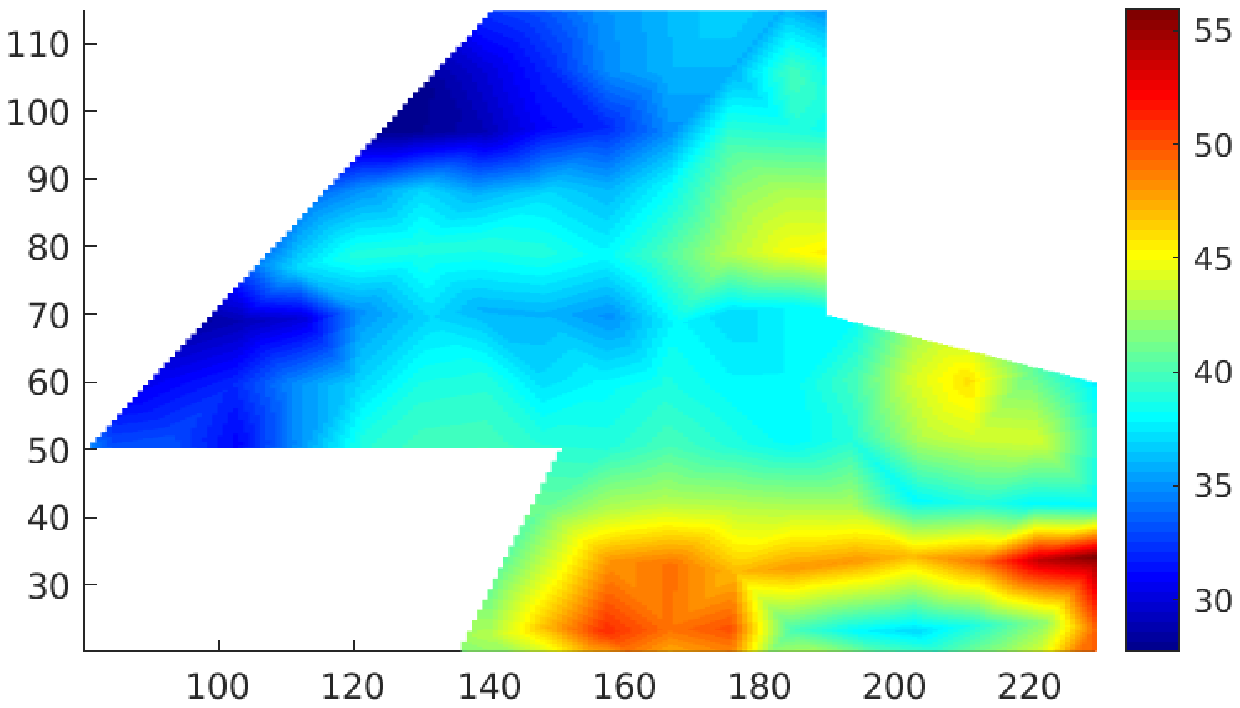}}
\subfigure[The predicted organic matter content \revone{(\textit{ppm})}.]{\label{predicted:final}\includegraphics[width=0.32\textwidth]{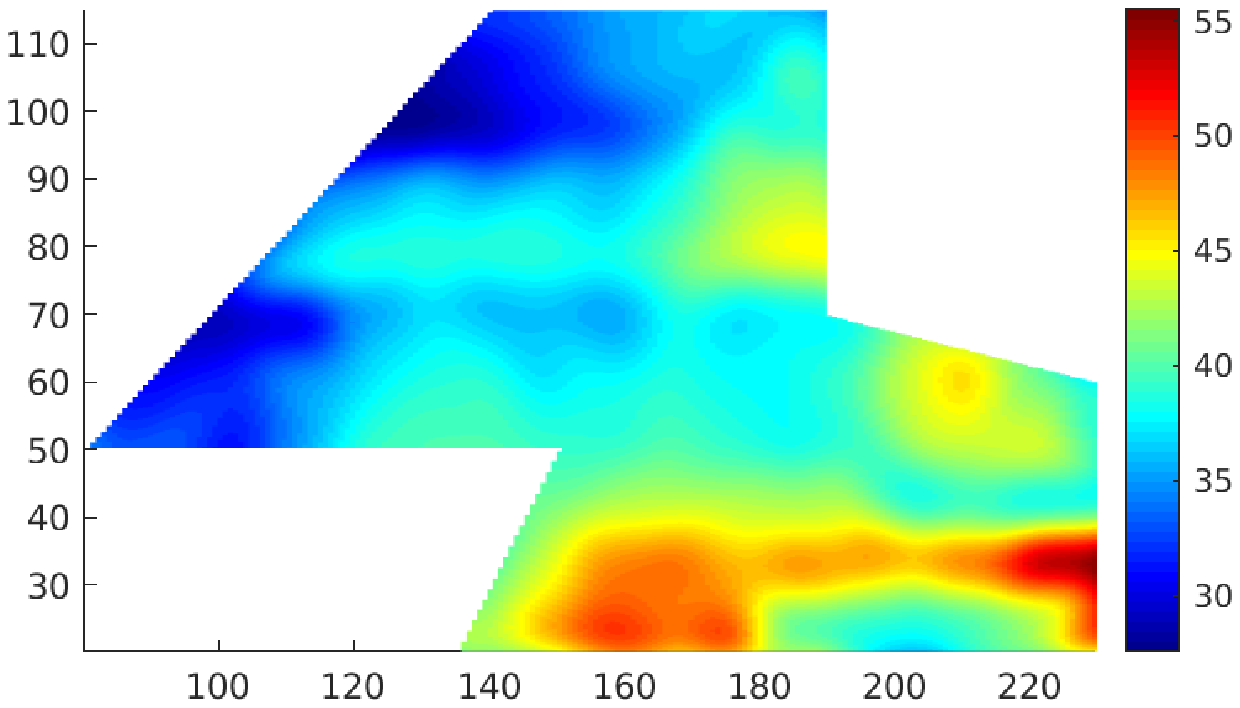}}
\subfigure[Prediction error between the actual and the predicted OM content \revone{(\textit{ppm})}.]{\label{fig:prediction_Error}\includegraphics[width=0.32\textwidth]{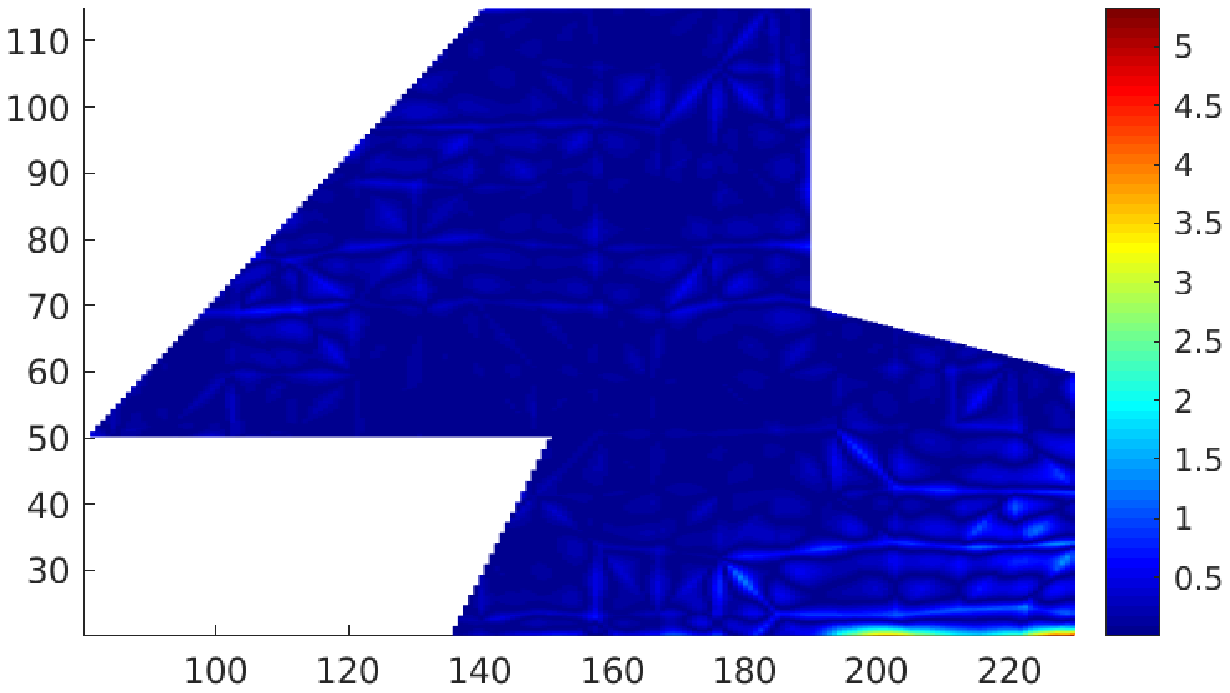}}
\caption{Actual and predicted OM content comparison. The farm is shown as the colored region with the colorbar denoting concentrations at different locations. All distance units are in \textit{meter}.~\label{fig:both_fields}}
\end{figure*}

\paragraph*{Single and Multi-Robot Tours}
The measurement locations computed by the \DCT{} are shown in Figure~\ref{fig:measurement_Locs}. \revone{As post-processing, we removed the redundant measurement locations in overlapping $3r_{max}$ radii disks. After performing this step, the total number of measurement locations was $2320$.} A covering of the farm with disks of radii $3r_{max}$ and an approximate optimal tour visit the centers of those disks calculated by \DCT{} is shown in Figure~\ref{disk_cover}. We compute the optimal TSP tour since this is a reasonably sized instance. The lawn-mower detours visiting individual $3r_{max}$ disks have been omitted to make the figure more legible. For the multi-robot version, we assume that we have three robots. Splitting of a single robot tour (Figure~\ref{disk_cover}) in three subtours is shown in Figure~\ref{fig:multi_split}. The robots start from a common depot.

\begin{figure*}
\centering
\subfigure[Measurement locations calculated by the \DC{} algorithm. \revone{There are $1012$ measurement locations.}]{\label{fig:measurement_Locs}\includegraphics[width=0.32\textwidth, height = 0.23\textwidth]{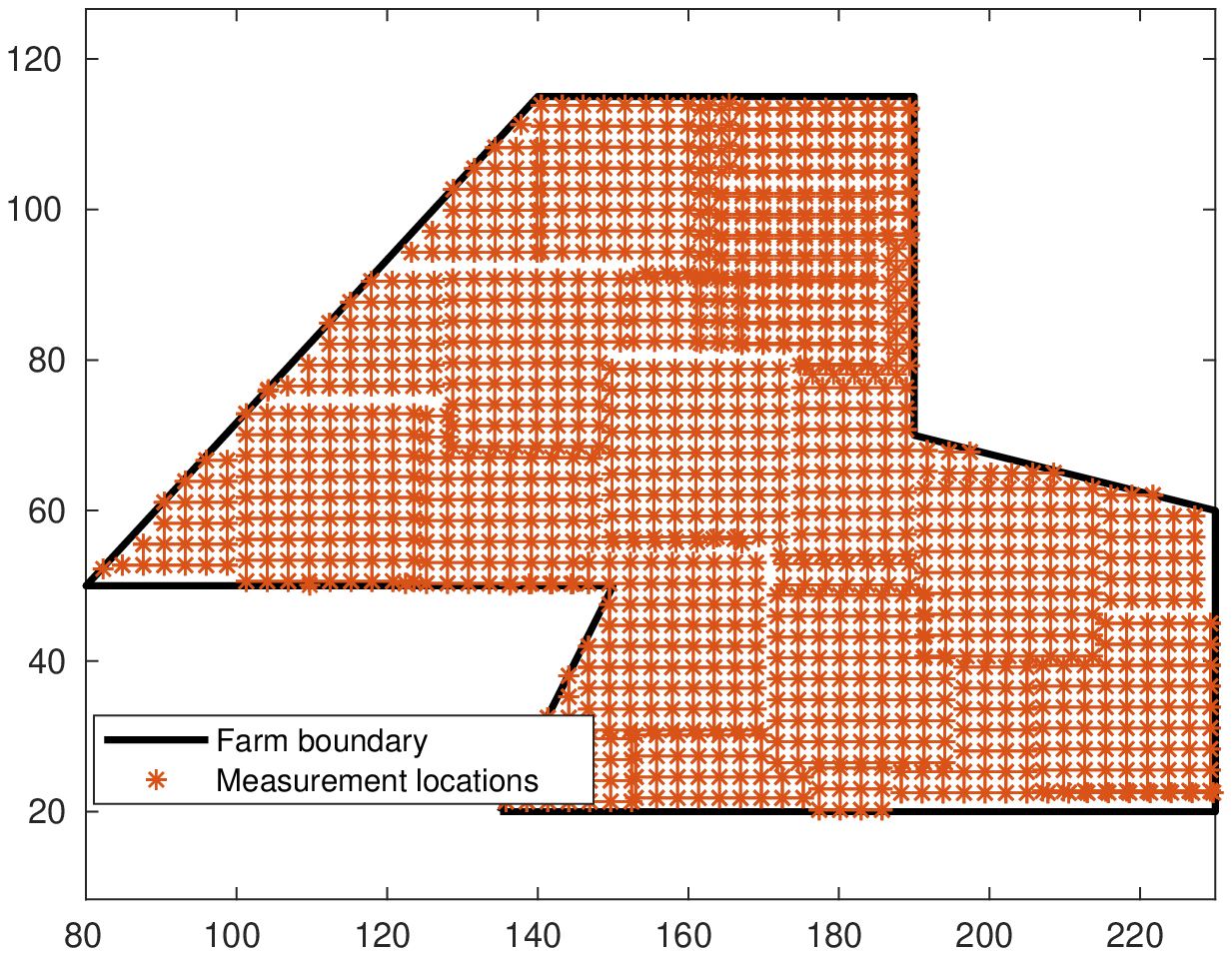}}
\subfigure[The red disks are of radii $3r_{max}$ which are concentric with disks of radii $r_{max}$ in $\mathcal{I}$. The depot is denoted by O.]{\label{disk_cover}\includegraphics[width=0.32\textwidth]{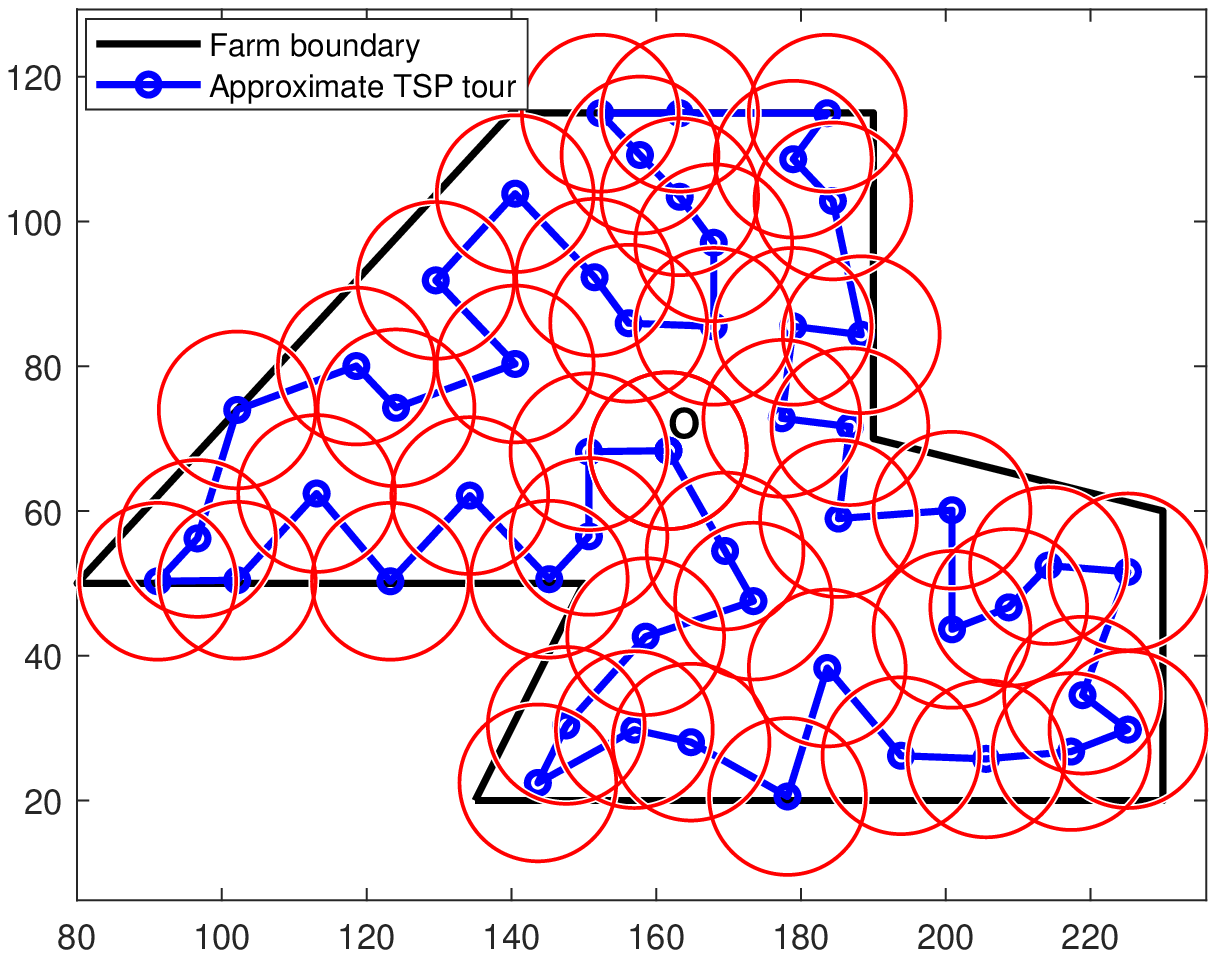}}
\subfigure[All robots start from and return to the depot after making measurements.]{\label{fig:multi_split}\includegraphics[width=0.32\textwidth]{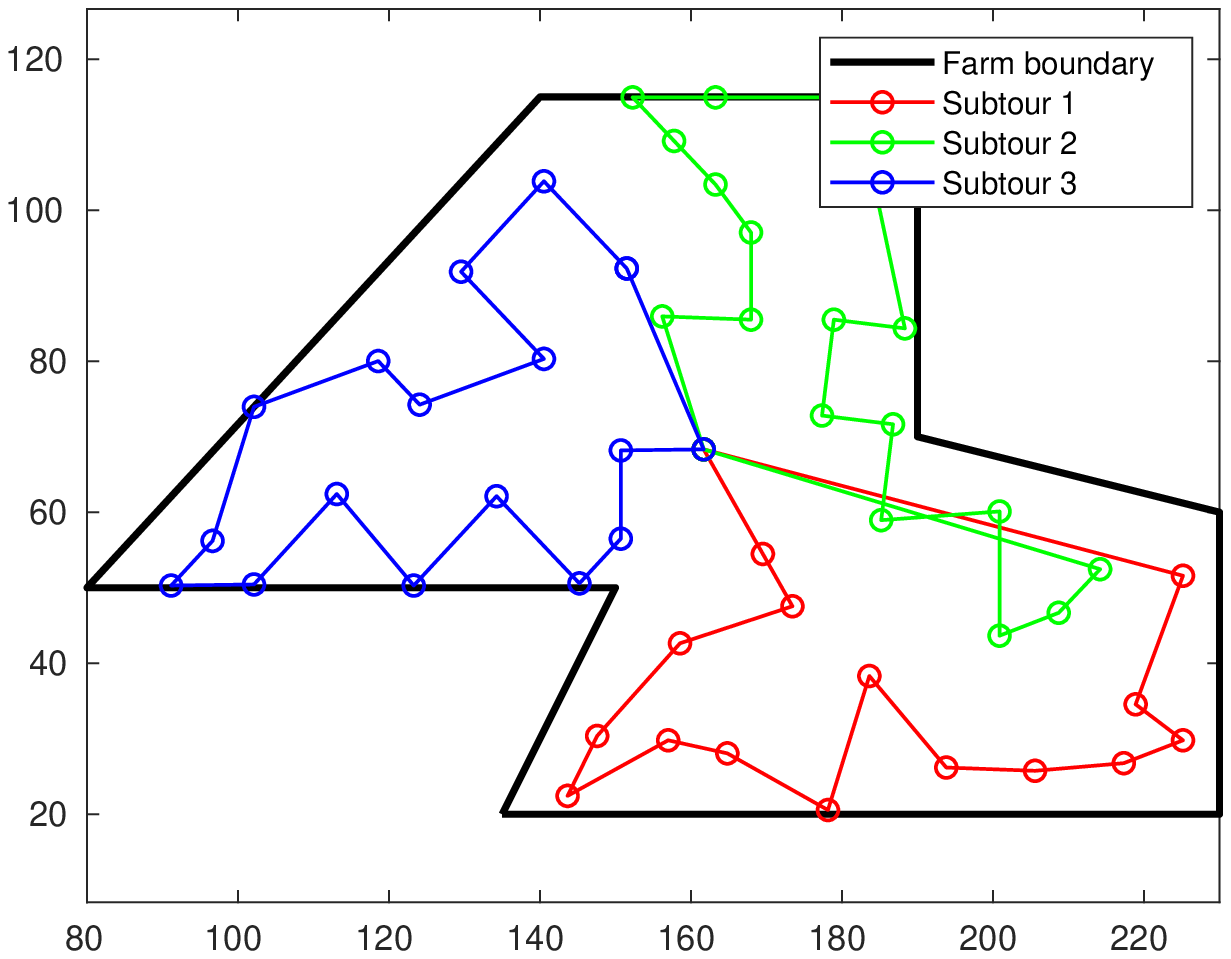}}
\caption{Measurement locations and the tours computed by \DC{} and \kDCT{}. For Figures~\ref{disk_cover} and~\ref{fig:multi_split}, the complete tours that take detours to visit all the locations in Figure~\ref{fig:measurement_Locs}  have been omitted to make the figures more legible.}
\end{figure*}

\paragraph*{Varying values of $\Delta$}
\revone{In some applications, one may be interested in having more accurate predictions in some parts of the environment than others. Our algorithms provides a way to choose locations and plan paths in such applications as well. To demonstrate this, we divide the farm in three sub-environments that have different $\Delta$ tolerances. The left-most, middle, and right-most regions have thresholds of $\Delta=6,~4~\textnormal{and}~2~ppm$ respectively. We solve for the measurement locations independently in each region. The corresponding $r_{max}$ values were calculated to be $4.97,~3.93,~\textnormal{and}~2.70$ meters respectively using Equation~\ref{eq:rmax_Delta}. Figure~\ref{fig:diffDeltaMeasure} shows the measurement locations. One can qualitatively observe that the algorithm places fewer measurement locations in the left-most sub-environment which allows for the highest error tolerance. 

An approximate TSP tour visiting the centers of all $3r_{max}$ disks, in all three regions, is shown in Figure~\ref{fig:diffDeltaTour}. The size of the disks shrinks as one moves to the right-most sub-region which has the least tolerance for prediction error. The TSP tour goes outside the environment in this case, which may be feasible if an aerial robot is used to monitor the farm. In case of applications, where the robot must stay inside the environment, we can enforce this constraint by replacing the Euclidean edge weights in the TSP input graph with the length of the shortest path between two vertices inside the environment.}

\begin{figure}
    \centering
  \includegraphics[width=0.4\textwidth]{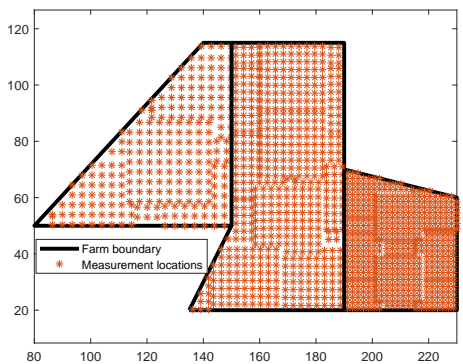}
  \caption{Measurement locations for different values of $\Delta$.~\label{fig:diffDeltaMeasure}.}
\end{figure}

\begin{figure}
    \centering
  \includegraphics[width=0.4\textwidth]{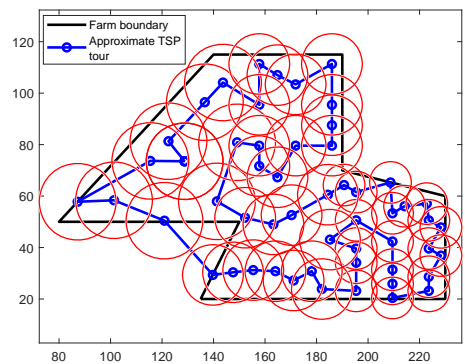}
  \caption{An approximate TSP tour to visit the $3r_{max}$ disks. $r_{max}$ values depend on $\Delta$ (Equation~\ref{eq:rmax_Delta}) and hence, vary in different $\Delta$ sub-regions. Note the shrinking size of disks as one moves towards right.~\label{fig:diffDeltaTour}}.
\end{figure}

\subsection{Comparisons with Pre-defined Lawn-mower Tours}
\revone{One can observe from Figure~\ref{fig:measurement_Locs} that the measurement location pattern closely resembles a lawn-mower pattern. It motivated us to compare the performance of our algorithms and with lawn-mower plan. Figure~\ref{fig:postLawnVSDCT} and~\ref{fig:predictLawnVSDCT} show the average posterior variance and average empirical (for ten trials) MSE respectively for a pre-defined lawn-mower pattern with varying grid resolutions on a semi-logarithmic scale. Note that the posterior variance at a test location is always same in each trial because it is not a function of the actual measurement value. The blue horizontal line corresponds to DCT and is shown for the sake of comparison. 

A plot of the time taken by the robot to cover lawn-mower patterns with various grid resolutions is shown in Figure~\ref{fig:timeLawn}. The lawn-mower lines in Figures~\ref{fig:postLawnVSDCT},~\ref{fig:predictLawnVSDCT}, and~\ref{fig:timeLawn} intersect the DCT lines at approximately a resolution of $2$ meters. It suggests that one would need to create a grid of approximately that resolution to achieve same performance as DCT. Figure~\ref{fig:LawnVSDCT} shows the average posterior variance for DCT and a pre-defined lawn-mower of resolution $2.4$ meter as a function time elapsed along a deployment (averaged over 10 deployments). We chose a resolution of $2.4$ meters since a lawn-mower planner with this resolution has approximately the same number of measurement locations as \DCT{}. We observe that both perform almost the same empirically. 

One may wonder why we cannot simply use the lawn-mower pattern, instead of \DCT{}. To create a lawn-mower pattern, one would need to pick a grid resolution. There is no systematic way of picking this resolution without enumerating a few combinations to analyze the trade-off between time and posterior variance or MSE. This can be wasteful. Instead, we present a systematic way of planning the measurement locations and give explicit theoretical guarantees on time and MSE or variance.}

\begin{figure}
    \centering
  \includegraphics[width=0.4\textwidth]{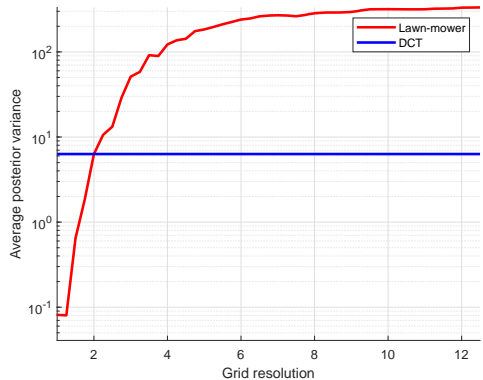}
  \caption{Average posterior variance for varying degree of lawn-mower resolutions.~\label{fig:postLawnVSDCT}}.
\end{figure}

\begin{figure}
    \centering
  \includegraphics[width=0.4\textwidth]{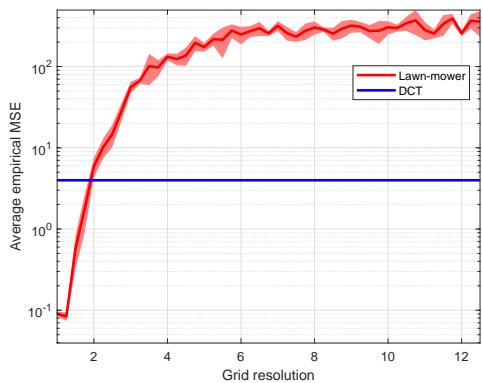}
  \caption{Average empirical MSE for varying degree of lawn-mower resolutions.~\label{fig:predictLawnVSDCT}}
\end{figure}

\begin{figure}
    \centering
  \includegraphics[width=0.4\textwidth]{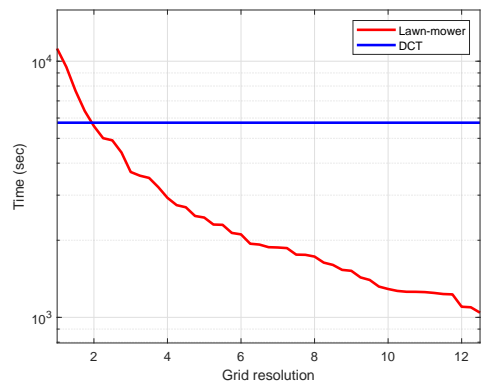}
  \caption{Time spent by the robot with lawn-mower planners of different grid resolutions.~\label{fig:timeLawn}}
\end{figure}

\begin{figure}
    \centering
  \includegraphics[width=0.4\textwidth]{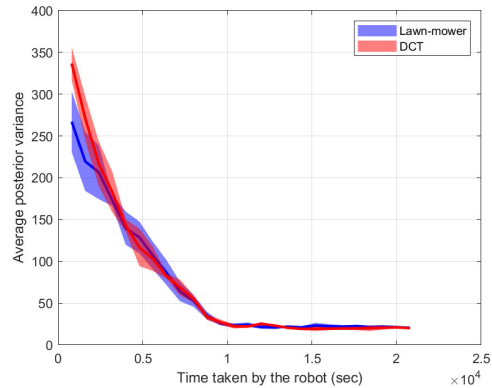}
  \caption{Average posterior variance as a function of time spent by the robot.~\label{fig:LawnVSDCT}}
\end{figure}

\subsection{Comparison With Other Baselines}
A comparison between \DCT{} and two baselines, entropy-based, and MI-based planner is shown in Figure~\ref{fig:compBaselines}. The measurement locations for the entropy-based and MI-based planners were calculated greedily, \ie, picking the next location at the point of maximum entropy and MI respectively as described in~\revone{\cite{krause2008near}}. \revone{We study the average posterior variance and average empirical MSE in prediction as a function of the total time (measurement plus traveling) spent by the robot on the farm for each planner. After finding the measurement locations for each planner separately, TSP tours visiting those locations were calculated. The $X$ axis in Figure~\ref{fig:compBaselines} shows the time taken along a tour and the $Y$ axis shows the respective metrics based on measurements collected until that point in time along the tour (averaged over ten trials).} We observe that \DCT{} performs at par with other planners. The entropy-based planner results in the most significant reduction in posterior variance and average empirical MSE initially. This can be explained by the fact that the entropy-based planning tends to spread the measurement locations far from each other resulting in covering a bigger portion of the environment initially. However, \DCT{} converges to a lower value of average empirical MSE and average posterior variance.

\begin{figure*}
\centering
\subfigure[Average empirical MSE]{\label{fig:avg_empError}\includegraphics[width=0.4\textwidth]{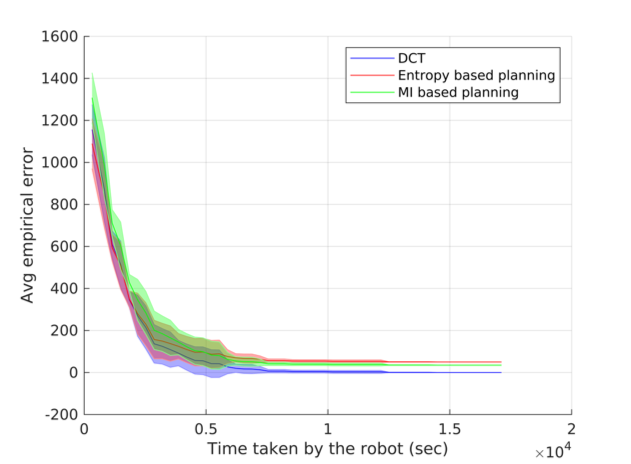}}
\subfigure[Average posterior variance]{\label{fig:post_Var}\includegraphics[width=0.4\textwidth]{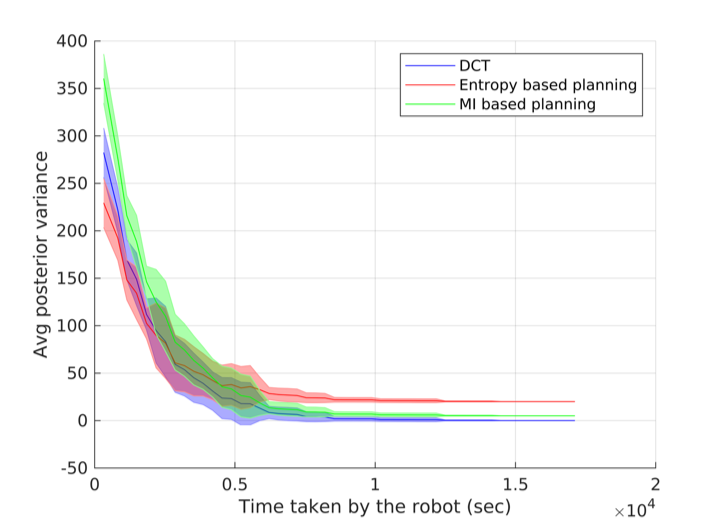}}
\caption{\DCT{} performs comparably with entropy-based and MI-based strategies. The shaded regions correspond to the standard deviation taken over ten trials.~\label{fig:compBaselines}}
\end{figure*}

\subsection{MSE and Variance}
We verify our hypothesis that MSE is equal to the posterior variance for GPs. A plot of the mean percent difference between the empirical MSE and the posterior variance is shown in Figure~\ref{fig:percent_diff}. \revone{The mean is computed over approximately 5600 test locations which are different from the measurement locations and placed on a grid.} As the number of trials increases, the mean difference between empirical MSE, which is essentially the MSE given enough number of trials, and the posterior variance decreases implying that the empirical MSE converges to the posterior variance asymptotically. \revone{In each trial, the measurement locations, test locations, and the hyperparameters are same, and therefore the variance estimates are same as well. However, the predicted value in each trial, and hence the prediction error, may be different since the actual measurement collected can be different in each trial due to the simulated noise. The effect of noise will decrease as one computes empirical estimate over a larger number of trials.}
\begin{figure}
    \centering
  \includegraphics[width=0.4\textwidth]{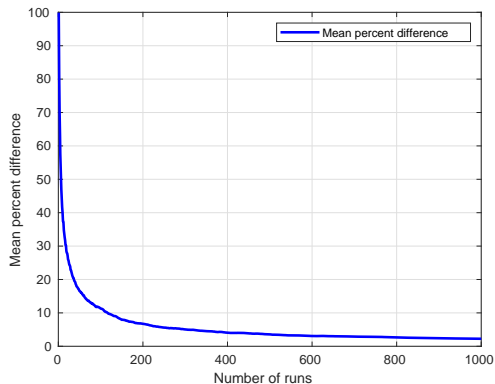}
  \caption{The mean percentage difference between the empirical MSE and the posterior variance. ~\label{fig:percent_diff}}.
\end{figure}

\section{Conclusion} \label{ch:conclusion}
In this paper, we study several problems: Placing the minimum number of stationary sensors to track a  spatial field, mapping a spatial field by a single as well as multiple robots while minimizing the time taken by the robots. For all the problems, \revone{we propose polynomial-time approximation algorithms to ensure that the mean square error in prediction the underlying spatial field is smaller than a pre-defined threshold at each point.} We also derive the lower bounds on the performance of any algorithm (including optimal) to solve respective problems are provided. We show that it is possible to learn a given spatial field accurately with high confidence without planning adaptively. Note that, if the kernel parameters are optimized online, then, one would require an adaptive strategy. 

The algorithms suggested in this paper perform comparatively with the baseline planners developed earlier. Our algorithms have theoretical bounds on their performance. \revone{The algorithms can also be generalized to 3D mapping, even though we illustrate using 2D examples. The disks in the 2D case will be replaced by spheres in 3D. The disk packing/covering problem becomes a sphere packing/covering. The tour will need to visit points in 3D, as opposed to 2D. The existing TSP algorithms already apply to the 3D case~\cite{861310}.} Our ongoing work is on developing competitive strategies for spatio-temporal learning and deriving similar guarantees for adaptive cases.

\section*{Acknowledgment}
This material is based upon work supported by the National Science Foundation under Grant number 1637915 and NIFA grant 2018-67007-28380.

\ifCLASSOPTIONcaptionsoff
  \newpage
\fi
\bibliographystyle{IEEEtran}
\bibliography{references}

\begin{IEEEbiography}[{\includegraphics[width=1in, height=1.1in]{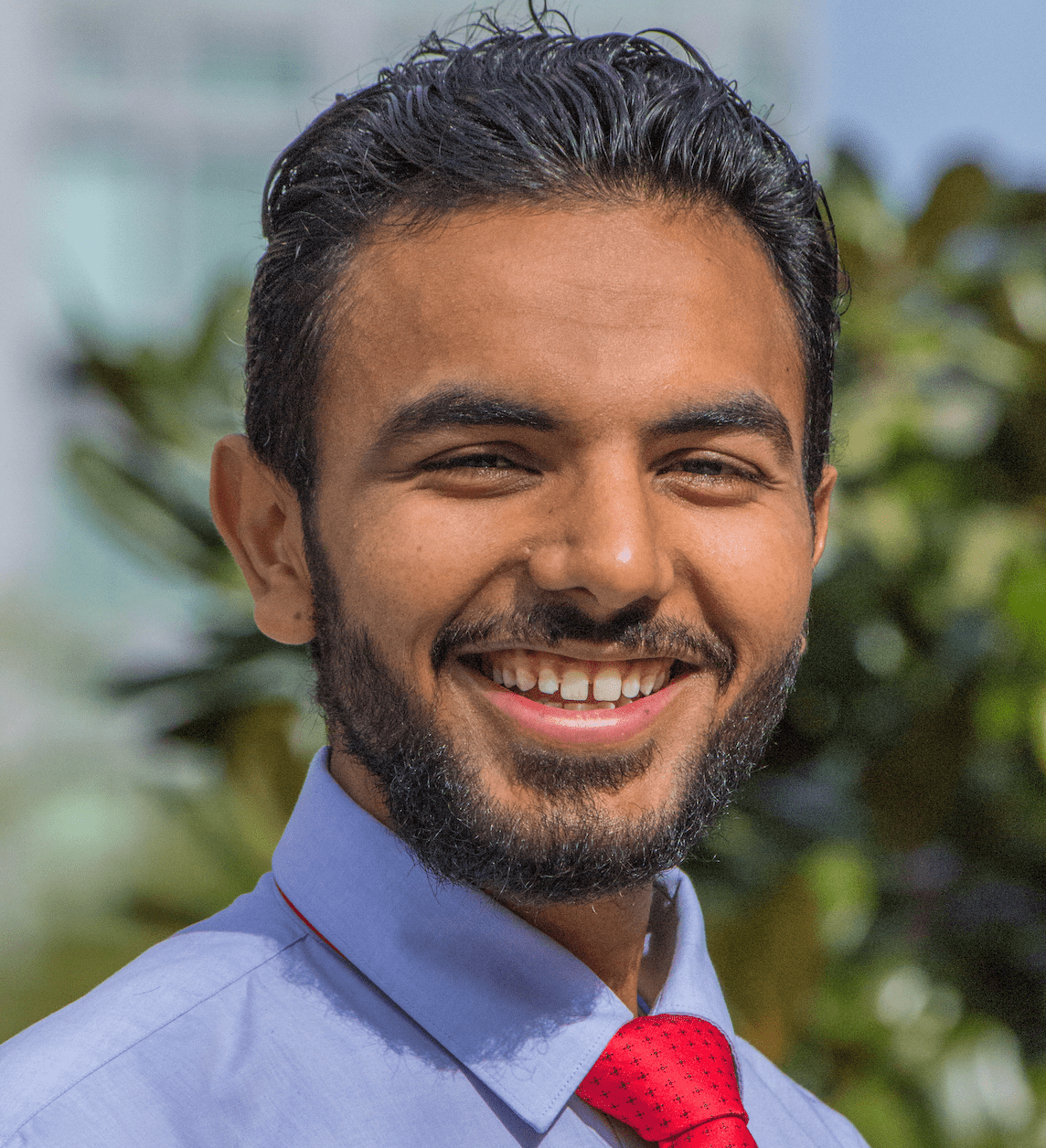}}]{Varun Suryan}
received his B.Tech. degree in mechanical engineering from Indian Institute of Technology Jodhpur, India, in 2016 and MS degree in computer engineering from Virginia Tech, Blacksburg, VA, USA, in 2019. He is pursuing his Ph.D. in computer science at University of Maryland, College Park, MD, USA. His research interests include algorithmic robotics, gaussian processes, and reinforcement learning.
\end{IEEEbiography}

\begin{IEEEbiography}[{\includegraphics[width=1in, height=1.1in]{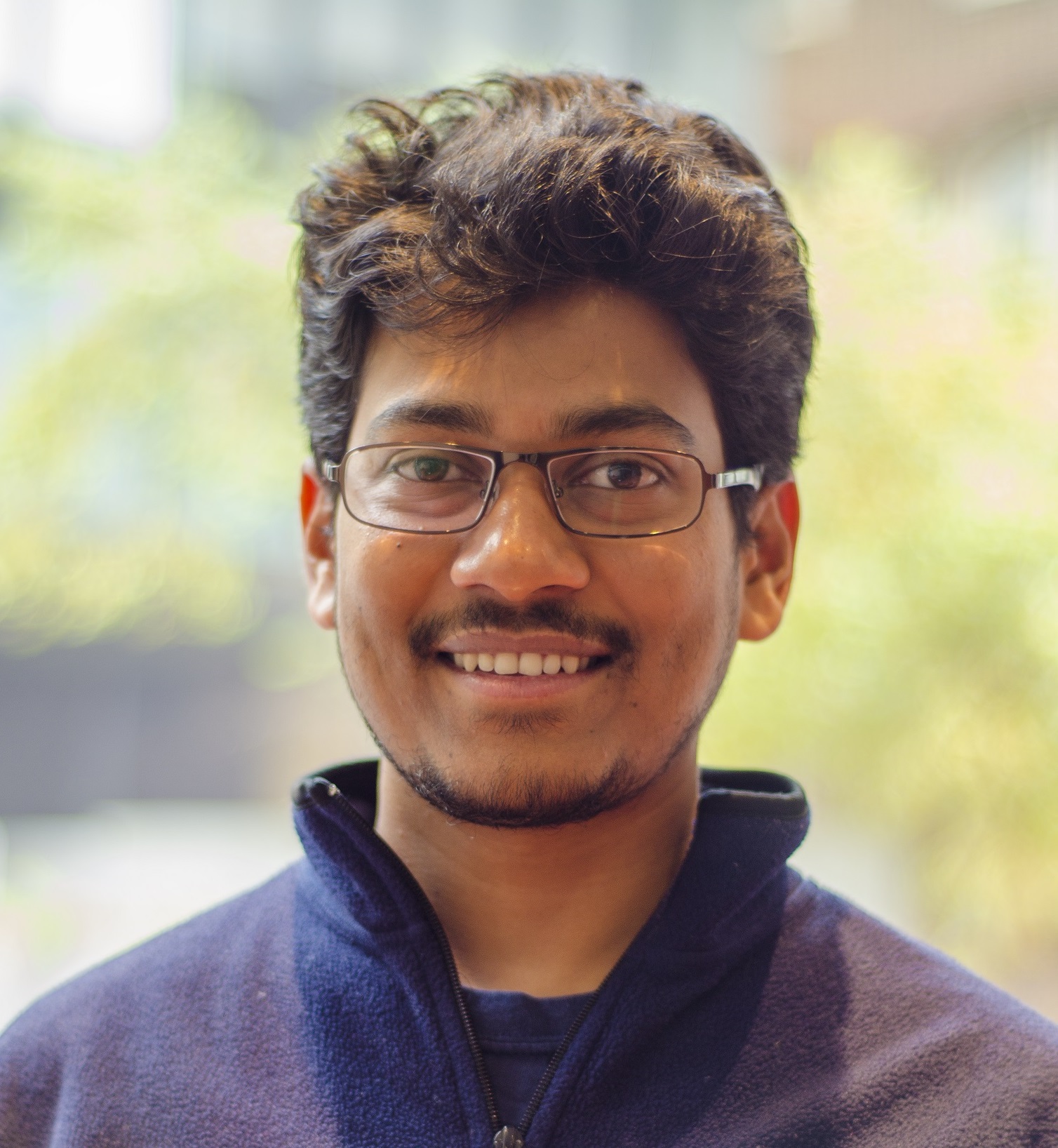}}]{Pratap Tokekar}
is an Assistant Professor in the Department of Computer Science at the University of Maryland. Previously, he was a Postdoctoral Researcher at the GRASP lab of University of Pennsylvania. Between 2015 and 2019, he was an Assistant Professor at the Department of Electrical and Computer Engineering at Virginia Tech. He obtained his Ph.D. in Computer Science from the University of Minnesota in 2014 and Bachelor of Technology degree in Electronics and Telecommunication from College of Engineering Pune, India in 2008. He is a recipient of the NSF CISE Research Initiation Initiative award and an Associate Editor for the IEEE Robotics and Automation Letters and Transactions of Automation Science and Engineering.
\end{IEEEbiography}

\begin{appendices}
		\section{} \label{app:appendix_one}
		\begin{proof}
		Consider two measurement locations $x_1, x_2$ and a test location $x$ such that $x_1$ is closer to $x$. The posterior variance at $x$ if a measurement was collected at $x_1$ can be computed as follows:
		\begin{align}
		    \hat{\sigma}^2_{x|x_1}&=k(x,x) - k(x, x_1)K(x_1, x_1)^{-1}k(x_1,x)\\
		    &=\sigma_0^2\left(1-\exp{\left(-\frac{||x-x_1||^2}{l^2}\right)}\right).\label{eq:nearVar}
		\end{align}
		Similarly, the posterior variance at $x$ if a measurement was collected at $x_2$, 
		\begin{align}
    \hat{\sigma}^2_{x|x_2} = \sigma_0^2\left(1-\exp{\left(-\frac{||x-x_2||^2}{l^2}\right)}\right).\label{eq:farVar}
		\end{align}
		From $||x-x_1||^2<||x-x_2||^2$ and $f(x)=-\exp{(-x)}$ being a monotonically increasing function, we have
		\begin{align}
-\exp{\left(-\frac{||x-x_1||^2}{l^2}\right)}<-\exp{\left(-\frac{||x-x_2||^2}{l^2}\right)}.
		\end{align}
		Using this to compare Equations~\ref{eq:nearVar} and~\ref{eq:farVar} one can easily see that $\hat{\sigma}^2_{x|x_1}<\hat{\sigma}^2_{x|x_2}$.
		\end{proof}
	\end{appendices}

\end{document}